\theoremstyle{definition}
\newtheorem{thm}{Theorem}
\newtheorem{cor}{Corollary}
\newtheorem{defn}{Definition}
\newtheorem{pro}{Proposition}
\newtheorem{lemma}{Lemma}
\DeclareMathOperator{\argmin}{argmin}
\DeclareMathOperator{\argmax}{argmax}
\DeclareMathOperator{\diam}{diam}
\DeclareMathOperator{\ddim}{ddim}
\DeclareMathOperator{\fat}{fat}
\DeclareMathOperator{\sign}{sign}
\begin{document}
\title{Metric Learning via Maximizing the Lipschitz Margin Ratio}
\author{Mingzhi~Dong, Xiaochen~Yang, Yang~Wu, Jing-Hao~Xue
\thanks{M.~Dong, X.~Yang and J.-H.~Xue are with the Department of Statistical Science, University College London, UK (e-mail: mingzhi.dong.13@ucl.ac.uk; xiaochen.yang.16@ucl.ac.uk; jinghao.xue@ucl.ac.uk).}
\thanks{Y.~Wu is with the Institute for Research Initiatives, Nara Institute of Science and Technology, Japan (e-mail: yangwu@rsc.naist.jp).}}

%%%%%%%%%\markboth{A manuscript to IEEE TNNLS}{Dong\MakeLowercase{~\textit{et al.}}: Lipschitz Margin Ratio}

\maketitle
\begin{abstract}
In this paper, we propose the Lipschitz margin ratio and a new metric learning framework for classification through maximizing the ratio. This framework enables the integration of both the inter-class margin and the intra-class dispersion, as well as the enhancement of the generalization ability of a classifier. To introduce the Lipschitz margin ratio and its associated learning bound, we elaborate the relationship between metric learning and Lipschitz functions, as well as the representability and learnability of the Lipschitz functions. After proposing the new metric learning framework based on the introduced Lipschitz margin ratio, we also prove that some well known metric learning algorithms can be shown as special cases of the proposed framework. In addition, we illustrate the framework by implementing it for learning the squared Mahalanobis metric, and by demonstrating its encouraging results on eight popular datasets of machine learning.
\end{abstract}

\begin{IEEEkeywords}
Metric learning, Lipschitz margin ratio, large margin metric learning, large margin nearest neighbor. 
\end{IEEEkeywords}

\section{Introduction}\label{s:intro}

Classification is a fundamental area in machine learning. 
For classification, it is crucial to appropriately measure the distance between instances. 
One of the established classifier, the nearest neighbor (NN) classifier, classifies a new instance into the class of the training instance with the shortest distance.  

In practice it is often difficult to handcraft a well-suited and adaptive distance metric. 
To mitigate this issue, metric learning has been proposed to enable learning a metric automatically from the data available. 
Metric learning with a convex objective function was first proposed in the pioneering work of~\cite{xing2002distance}. 
The large margin intuition was introduced into the research of metric learning by the seminal ``large margin metric learning" (LMML)~\cite{schultz2004learning} and ``large margin nearest neighbor" (LMNN)~\cite{weinberger2009distance}. 
Besides the large margin approach, other inspiring metric learning strategies have been developed, such as nonlinear metrics~\cite{kedem2012non,hu2014discriminative}, localized strategies~\cite{dong2017lam3l,wang2014globality,noh2018generative} and scalable/efficient algorithms~\cite{shen2014efficient,qian2015efficient}. 
Metric learning has also been adopted by many other learning tasks, 
such as semi-supervised learning~\cite{ying2017manifold}, unsupervised-learning~\cite{jia2016new}, multi-task/cross-domain learning~\cite{luo2017heterogeneous,wang2018cross}, AUC optimization~\cite{huo2018cross} and distributed approaches~\cite{li2015distributed}.
%such as multi-task learning~\cite{yang2013geometry} and semi-supervised learning~\cite{nicolae2015joint}.
 
On top of the methodological and applied advancement of metric learning, some theoretical progress has also been made recently, in particular on deriving different types of generalization bounds for metric learning~\cite{jin2009regularized,  guo2014guaranteed, verma2015sample, cao2016generalization}.
These developments have theoretically justified the performance of metric learning algorithms. 
However, they generally lack a geometrical link with the classification margin, not as interpretable as one may expect (e.g.~like the clear relationship between margin and $1/|w|$ in support vector machines (SVM)).

\begin{figure}
\centering
\includegraphics[width=0.5\textwidth]{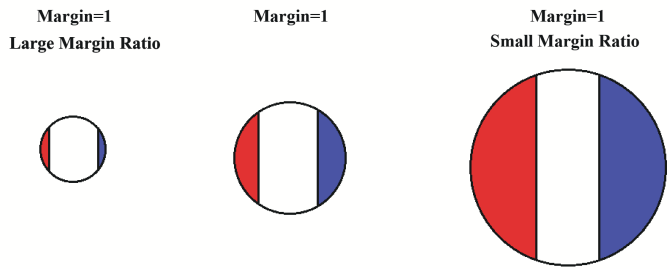}
\caption{An illustration of the margin ratio. Each ball indicates a metric space. The red area indicates the area of positive class instances; the blue area indicates the area of negative class instances. Although the margins between the two classes in different metric spaces are the same, it is intuitive that the difficulties of classification are distinct in different metric spaces.} 
\label{margin_ratio_ball}
\end{figure}

Besides the inter-class margin, the intra-class dispersion is also crucial to classification~\cite{flamary2016wasserstein,do2013convex,jebara2009relative}. 
The intra-class dispersion is especially important for metric learning, because different metrics may lead to similar inter-class margins and quite different intra-class dispersion. 
As illustrated in Figure~\ref{margin_ratio_ball}, although the margins in those different metric spaces are exactly the same, the classification becomes more difficult as the margin ratio decreases. Therefore, the seminal work of~\cite{xing2002distance} and many later work made efforts to consider the inter-class margin and the intra-class dispersion at the same time. 

In this paper, we aim to propose a new concept, the Lipschitz margin ratio, to integrate both inter-class and intra-class properties, and through maximizing the Lipschitz margin ratio we aim to propose a new metric learning framework to enable the enhancement of the generalization ability of a classifier. These two novelties are our main contributions to be made in this work.

To achieve these two aims and present our contributions in a well-structured way, we organize the rest of this paper as follows.
Firstly, in Section~\ref{s:Lip-and-distance} we discuss the relationship between the distance-based classification / metric learning and Lipschitz functions. We show that a Lipschitz extension, which is a distance-based function, can be regarded as a generalized nearest neighbor model, which enjoys great representation ability. 
Then, in Section~\ref{s:Lip-ratio} we introduce the Lipschitz margin ratio, and we point out that its associated learning bound indicates the desirability of maximizing the Lipschitz margin ratio, for enhancing the generalization ability of Lipschitz extensions. 
Consequently in Section~\ref{s:framework}, we propose a new metric learning framework through maximizing the Lipschitz margin ratio. Moreover, we prove that many well known metric learning algorithms can be shown as special cases of the proposed framework. 
Then for illustrative purposes, we implement the framework for learning the squared Mahalanobis metric. The method is presented in Section~\ref{ss:special}, and its experimental results in Section~\ref{s:experiments}, which demonstrate the superiority of the proposed method. 
Finally, we draw conclusions and discuss future work in Section~\ref{s:conclusion}. 
For the convenience of readers, some theoretical proofs are deferred to the Appendix.

\section{Lipschitz Functions and Distance-based Classifiers}\label{s:Lip-and-distance}

\subsection{Definition of Lipschitz Functions}

To start with, we will review the definitions of Lipschitz functions, the Lipschitz constant and the Lipschitz set.

\begin{defn}
\cite{weaver1999lipschitz}  Let $(\mathcal X, \rho_{\mathcal X})$ be a metric space. A function $f: \mathcal X \rightarrow \mathds R$ is called \emph{Lipschitz continuous} if $\exists C < \infty, \forall \bm x_1, \bm x_2 \in \mathcal X$,
\begin{equation*}
%\begin{split}
     | f(\bm x_1)- f(\bm x_2)| \leq C \rho_{\mathcal X}(\bm x_1, \bm x_2).
%\end{split}
\end{equation*}
The \emph{Lipschitz constant} $L(f)$ of a Lipschitz function $f$ is
\begin{equation*}
\begin{split}
  & L(f) \\
  = & \inf\{ C\in \mathds{R} | \forall \bm x_1, \bm x_2 \in \mathcal X, |f(\bm x_1)- f(\bm x_2)| \leq C \rho_{\mathcal X}(\bm x_1, \bm x_2) \}  \\
    = & \sup_{\bm x_1, \bm x_2 \in \mathcal X: \bm x_1 \neq \bm x_2} \frac{|f(\bm x_1)- f(\bm x_2)|}{\rho_{\mathcal X}(\bm x_1, \bm x_2)},
\end{split}
\end{equation*}
and function $f$ is also called a $L$-\emph{Lipschitz function} if its Lipschitz constant is $L$. Meanwhile, all $L$-Lipschitz functions construct the $L$-\emph{Lipschitz set}
\begin{equation*}
    L\mbox{-}Lip(\mathcal X)= \{f: \mathcal X \rightarrow \mathds R; L(f) \leq L\}.
\end{equation*}
\end{defn}

From the definitions, we can observe that the Lipschitz constant is fundamentally connected with the metric $\rho_{\mathcal X}$; and that the Lipschitz functions have specified a family of ``smooth'' functions, whose change of output values can be bounded by the distances in the input space.

\subsection{Lipschitz Extensions and Distance-based Classifiers}

Distance-based classifiers are the classifiers that are based on certain kinds of distance metrics. 
Most of distance-based classifiers stem from the nearest neighbors (NN) classifier. 
To decide the class label of a new instance, the NN classifier compares the distances between the new instance and the training instances.

In binary classification tasks, a Lipschitz function is commonly used as the classification function $f$ and the instance ${\bm x}$ is then classified according to the sign of $f({\bm x})$. 
Using Theorem~\ref{thm:MWext}, we shall present a family of Lipschitz functions, called Lipschitz extensions. 
We shall also show that Lipschitz extensions present a distance-based classifier, and that a special case of Lipschitz extensions returns exactly the same classification result as the NN classifier.

\begin{thm}
\cite{mcshane1934extension,whitney1934analytic,weaver1999lipschitz,luxburg2004distance} (McShane-Whitney Extension Theorem)
Given a function $u$ defined on a finite subset $A=\{\bm x_1,\dots, \bm x_n\}$, there exist a family of functions which coincide with $u$ on $\bm x_1, \dots, \bm x_n$, are defined on the whole space $\mathcal X$, and have the same Lipschitz constant as $u$. Additionally, it is possible to explicitly construct $u$ in the following form and they are called \emph{$L$-Lipschitz extensions} of $u$:
\begin{equation*}
U_{\alpha}(\bm x) = \alpha  U_1(\bm x) + (1-\alpha) U_2(\bm x), 
\end{equation*}
where $\alpha \in [0,1]$, 
\begin{align*}
    U_1(\bm x)&= \overline u(\bm x) = \inf_{\bm a \in A}\{ u(\bm a)  + L \rho(\bm x, \bm a)\},\\
    U_2(\bm x)&= \underline u(\bm x) = \sup_{\bm a \in A}\{ u(\bm a ) - L \rho(\bm x, \bm a)\}.
\end{align*}
\label{thm:MWext}
\end{thm}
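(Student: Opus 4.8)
The plan is to write $L = L(u)$ for the Lipschitz constant of $u$ computed on $A$, and to establish three facts in turn: (i) each of $U_1$ and $U_2$ is $L$-Lipschitz on the whole of $\mathcal X$; (ii) each of $U_1, U_2$ restricts to $u$ on $A$; and (iii) the convex combination $U_\alpha$ inherits both properties, so that $L(U_\alpha)=L$ exactly. The construction is explicit, so the work is entirely verification of these three points.

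For (i), the key observation is that for each fixed $\bm a \in A$ the single map $\bm x \mapsto L\rho(\bm x,\bm a)$ is itself $L$-Lipschitz, by the reverse triangle inequality $|\rho(\bm x,\bm a)-\rho(\bm y,\bm a)|\le \rho(\bm x,\bm y)$. Hence every member of the family $\{u(\bm a)+L\rho(\cdot,\bm a)\}_{\bm a\in A}$ is $L$-Lipschitz, and I would show that a pointwise infimum of $L$-Lipschitz functions is again $L$-Lipschitz: for any $\bm x,\bm y$ and any $\bm a$, $u(\bm a)+L\rho(\bm x,\bm a)\le [u(\bm a)+L\rho(\bm y,\bm a)]+L\rho(\bm x,\bm y)$, and taking the infimum over $\bm a$ gives $U_1(\bm x)\le U_1(\bm y)+L\rho(\bm x,\bm y)$; swapping $\bm x$ and $\bm y$ yields $|U_1(\bm x)-U_1(\bm y)|\le L\rho(\bm x,\bm y)$. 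The same argument applied to suprema handles $U_2$, and finiteness of $A$ guarantees both extrema are finite.

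For (ii), fix $\bm x_j\in A$. Choosing $\bm a=\bm x_j$ in the infimum defining $U_1$ gives $U_1(\bm x_j)\le u(\bm x_j)$, while for every $\bm a\in A$ the Lipschitz bound $u(\bm x_j)-u(\bm a)\le L\rho(\bm x_j,\bm a)$ rearranges to $u(\bm a)+L\rho(\bm x_j,\bm a)\ge u(\bm x_j)$, whence $U_1(\bm x_j)\ge u(\bm x_j)$; therefore $U_1(\bm x_j)=u(\bm x_j)$. The symmetric computation gives $U_2=u$ on $A$, and since both endpoints agree with $u$ there, so does every convex combination $U_\alpha$.

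For (iii), convexity gives $|U_\alpha(\bm x)-U_\alpha(\bm y)|\le \alpha|U_1(\bm x)-U_1(\bm y)|+(1-\alpha)|U_2(\bm x)-U_2(\bm y)|\le L\rho(\bm x,\bm y)$, so $L(U_\alpha)\le L$; and because $U_\alpha$ coincides with $u$ on the finite set $A$, the pair $(\bm a,\bm b)$ attaining $L=\sup_{\bm a\ne\bm b}|u(\bm a)-u(\bm b)|/\rho(\bm a,\bm b)$ forces $L(U_\alpha)\ge L$, giving equality. The step I expect to require the most care is this last one: one must confirm that the bounds $\le L$ and $\ge L$ refer to the Lipschitz constant over the same domain $\mathcal X$, and that the extremal pair on $A$ is genuinely attained — both consequences of $A$ being finite — so that the construction preserves the Lipschitz constant rather than merely bounding it.
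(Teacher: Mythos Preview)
Your proposal is correct and follows essentially the same approach as the paper: the paper's argument is only a sketch, saying that agreement on $A$ is verified by direct calculation and that the Lipschitz bound for $\overline u$ and $\underline u$ follows from the appendix lemmas (specifically $L(\min(u,v))\le\max\{L(u),L(v)\}$ and $L(au+bv)\le |a|L(u)+|b|L(v)$), which is exactly what you prove inline in parts (i)--(iii). Your treatment is in fact more complete, since you explicitly close the gap $L(U_\alpha)\ge L$ via the extremal pair in $A$, a point the paper's sketch leaves implicit.
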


Theorem~\ref{thm:MWext} can be readily validated by calculating the values of $U_1(\bm x)$ and $U_2(\bm x)$ on the finite points $\bm x_1, \dots, \bm x_n$.  
The bound of the Lipschitz constant of $\overline u(\bm x)$ and $\underline u(\bm x)$ can be proved on the basis of the Lemmas in Appendix.

Theorem~\ref{thm:MWext} clearly shows that Lipschitz extensions are distance-based function.
Moreover, we can illustrate the relationship between Lipschitz extension functions and empirical risk as follows. 

Assume $A$ is the set of training instances of a classification task $A=\{\bm x_1,\dots, \bm x_N\}$. If there are no $\bm x_i, \bm x_j$ such that $\rho(\bm x_i,\bm x_j)=0$ while their labels $t_i \neq t_j$ (i.e. no overlap between training instances from different classes), setting $u(\bm x_i) = t_i$ would result in zero empirical risk, and $u(\bm x_i)$ would be a Lipschitz function with Lipschitz constant $L_0$,
\begin{equation*}
    L_0 = \sup_{i,j} \frac{|t_i-t_j|}{\rho(\bm x_i,\bm x_j)},
\end{equation*}
where the existence of such a function $u$, i.e.~the Lipschitz extensions, is guaranteed by Theorem~\ref{thm:MWext}. 

That is, when doing classification, if we set $L$ of Lipschitz extension to be larger than $L_0$, zero empirical risk could be obtained. In other words, as distance-based functions, Lipschitz extensions enjoy excellent representation ability for classification tasks. 

Moreover, if we set $\alpha$ as $1/2$, Lipschitz extensions will have exactly the same classification results as the NN classifier:
\begin{pro} \cite{luxburg2004distance}
The function $U_{1/2}(\bm x)$ defined above has the same sign, i.e.~has the same classification results, as that of the NN classifier.
\end{pro}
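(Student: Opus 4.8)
The plan is to work directly with the midpoint extension $U_{1/2}(\bm x) = \tfrac{1}{2}(U_1(\bm x) + U_2(\bm x))$ and to show that its sign is dictated entirely by which class owns the nearest training point. I adopt the binary sign convention $u(\bm x_i) = t_i \in \{-1,+1\}$ and abbreviate the distances from $\bm x$ to the nearest positive and nearest negative training instances by
\[
d_+ = \min_{\bm a \in A:\, u(\bm a) = +1} \rho(\bm x, \bm a), \qquad d_- = \min_{\bm a \in A:\, u(\bm a) = -1} \rho(\bm x, \bm a).
\]
With this notation the NN classifier returns $+1$ exactly when $d_+ < d_-$ and $-1$ when $d_- < d_+$, so the goal reduces to proving $\sign U_{1/2}(\bm x) = \sign(d_- - d_+)$.

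First I would use the finiteness of $A$ together with the fact that $u$ takes only the two values $\pm 1$ to split the infimum and supremum defining $U_1$ and $U_2$ according to the label of $\bm a$. Minimizing $u(\bm a) + L\rho(\bm x,\bm a)$ over the positive points gives $1 + L d_+$ and over the negative points gives $-1 + L d_-$, and symmetrically for the supremum, yielding the closed forms
\[
U_1(\bm x) = \min\{\, 1 + L d_+,\ -1 + L d_- \,\}, \qquad U_2(\bm x) = \max\{\, 1 - L d_+,\ -1 - L d_- \,\}.
\]
This turns the whole claim into an elementary comparison of four linear quantities in $d_+$ and $d_-$.

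The crux is a short case analysis on $s := L(d_- - d_+)$, which selects the active branch of each $\min$ and $\max$. When $-2 \le s \le 2$ the ``cross'' branches are selected, giving $U_1 = L d_- - 1$ and $U_2 = 1 - L d_+$, so that $U_1 + U_2 = L(d_- - d_+) = s$ and hence $\sign U_{1/2}(\bm x) = \sign(d_- - d_+)$ immediately. When $s \ge 2$ the two ``positive'' branches win and $U_1 + U_2 = 2$; when $s \le -2$ the two ``negative'' branches win and $U_1 + U_2 = -2$; in either extreme the sign still agrees with $\sign(d_- - d_+)$. I would verify that these three regimes exhaust $\mathds{R}$ and that the boundary values $s = \pm 2$ match across adjacent branches, so that the sign identity holds without gaps.

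Assembling the cases gives $\sign U_{1/2}(\bm x) = \sign(d_- - d_+)$, which is exactly the NN decision; the degenerate case $d_+ = d_-$ (that is $s = 0$) produces $U_{1/2}(\bm x) = 0$, consistent with the tie in the NN rule. I expect the only real difficulty to be the bookkeeping of the non-smooth $\min/\max$ expressions and the careful handling of the boundary ties; once the closed forms for $U_1$ and $U_2$ are established, the remainder is routine.
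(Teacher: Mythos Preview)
Your argument is correct. Note, however, that the paper itself does not supply a proof of this proposition; it is stated with a citation to von~Luxburg and Bousquet and left unproved in the text. The approach you outline---rewriting $U_1$ and $U_2$ as $\min\{1+Ld_+,\,-1+Ld_-\}$ and $\max\{1-Ld_+,\,-1-Ld_-\}$ and then doing a three-regime case analysis on $s=L(d_--d_+)$---is exactly the standard argument that appears in the cited source, so there is nothing to contrast: your proof is the canonical one, and the bookkeeping you flag (branch selection at $s=\pm 2$, the tie at $s=0$) is handled correctly.
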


\section{Lipschitz Margin Ratio}\label{s:Lip-ratio}

In the previous section, we show that Lipschitz extensions can be viewed as a distance-based classifier, and its representation ability is so strong that zero empirical error can be obtained under mild conditions. In this section, we shall propose the Lipschitz margin ratio to control the model complexity of the Lipschitz functions and hence improve its generalization ability. To start with, we propose an intuitive way to understand the Lipschitz margin and the Lipschitz margin ratio. Then, learning bounds of the Lipschitz margin ratio will be presented.

\subsection{Lipschitz Margin}

We define the training set of class $k$ as $\bm S_{k}=\{\bm x_i | t_i = k, \bm x_i \in \bm S\}$, where $k\in\{1,-1\}$; %$\bm S_{\xi=0, f}=\{\bm x_i | t_i f(\bm x_i) \geq 1\}$;
%$\bm S_{c, \xi=0, f}=\{\bm x_i | t_i f(\bm x_i) \geq 1,  t_i = c\}$;
the decision boundary of classification function $f$ as $\bm H_f=\{\bm h| \bm h \in \mathcal X, f(\bm h)=0\}$.
The margin used in \cite{luxburg2004distance} is equivalent to the Lipschitz margin defined below.

\begin{defn}
The \emph{Lipschitz margin} is the distance between the training sets $\bm S_{1}$ and $\bm S_{-1}$:
%between the hyperplane and the training instances $d(\bm S, \bm H)$
%The Lipschitz hard margin in \cite{luxburg2004distance} is defined as the shortest distance
%between the hyperplane and the training instances $d(\bm S, \bm H)$
\begin{equation}
\mbox{L-Margin}= D(\bm S_{1}, \bm S_{-1})= \min\limits_{\bm x_i \in \bm S_{-1}, \bm x_j\in \bm S_{1} } \rho(\bm x_i, \bm x_j).
\end{equation}
\end{defn}

The relationship between the Lipschitz margin and the Lipschitz constant is established as follows.

\begin{pro}
\label{margin_lipcons}
For any $L$-Lipschitz function $f$ satisfying $\forall \bm x_i \in \bm S_{1}, f(\bm x_i) \geq 1$ and $\forall \bm x_j \in \bm S_{-1}, f(\bm x_j) \leq -1$,
\begin{equation}
\mbox{L-Margin} \geq \frac{2}{L(f)}. 
\end{equation}
\end{pro}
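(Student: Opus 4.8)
The plan is to exploit the Lipschitz inequality on a single arbitrary cross-class pair of points and then pass to the minimum over all such pairs, mirroring the classical SVM margin computation. First I would fix any $\bm x_i \in \bm S_{1}$ and any $\bm x_j \in \bm S_{-1}$. Since $f$ is Lipschitz with constant $L(f)$, the supremum form of the definition of the Lipschitz constant gives directly that $|f(\bm x_i) - f(\bm x_j)| \leq L(f)\,\rho(\bm x_i, \bm x_j)$ for this (and every) pair.

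Next I would bound the left-hand side from below using the separation hypotheses. Because $f(\bm x_i) \geq 1$ while $f(\bm x_j) \leq -1$, we have $f(\bm x_i) - f(\bm x_j) \geq 2$, and hence $|f(\bm x_i) - f(\bm x_j)| \geq 2$. Chaining this with the Lipschitz bound yields $2 \leq |f(\bm x_i) - f(\bm x_j)| \leq L(f)\,\rho(\bm x_i, \bm x_j)$, which rearranges to $\rho(\bm x_i, \bm x_j) \geq 2/L(f)$. Since the right-hand side does not depend on the particular pair chosen, I would finally take the minimum over all cross-class pairs to obtain $\mbox{L-Margin} = \min_{\bm x_i \in \bm S_{-1},\, \bm x_j \in \bm S_{1}} \rho(\bm x_i, \bm x_j) \geq 2/L(f)$, as claimed.

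There is no serious obstacle here, as the argument is essentially the metric-space analogue of the SVM margin calculation. The only point deserving minor care is the role of $L(f)$: the proposition states the bound in terms of the exact (infimal) Lipschitz constant $L(f)$ rather than a generic admissible constant $C$, so I would note that the supremum characterization $L(f) = \sup_{\bm x_1 \neq \bm x_2} |f(\bm x_1)-f(\bm x_2)|/\rho(\bm x_1,\bm x_2)$ legitimizes applying the Lipschitz inequality with precisely $C = L(f)$, thereby giving the tightest possible bound and ensuring the stated inequality is not merely valid but sharp in its dependence on $L(f)$.
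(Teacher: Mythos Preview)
Your proposal is correct and follows essentially the same approach as the paper: both combine the Lipschitz inequality with the lower bound $|f(\bm x_i)-f(\bm x_j)|\geq 2$ for cross-class pairs. The only cosmetic difference is that the paper applies the argument directly to the minimizing pair $(\bm x_n,\bm x_m)$ realizing $D(\bm S_1,\bm S_{-1})$, whereas you argue for an arbitrary pair and then take the minimum---these are equivalent.
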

\begin{proof}
Let $\bm x_n$ and $\bm x_m$ denote the nearest instances from different classes, i.e.
\begin{equation*}
\rho(\bm x_n, \bm x_m) = D(\bm S_{1}, \bm S_{-1})
 = \min\limits_{\bm x_i \in \bm S_{-1}, \bm x_j\in \bm S_{1} } \rho(\bm x_i, \bm x_j).
\end{equation*}
It is straightforward to see
\begin{equation*}
\begin{split}
    \frac{2}{L(f)} &\leq \frac{2}{|f(\bm x_n)- f(\bm x_m)|/ \rho (\bm x_n, \bm x_m)} \\
    &\leq \rho (\bm x_n, \bm x_m) \\
    &=D(\bm S_{1}, \bm S_{-1}),
\end{split}
\end{equation*}
where the first inequality follows from the definition of the Lipschitz constant; and the second inequality is for the reason that $\forall \bm x_i \in \bm S_{1}, f(\bm x_i) \geq 1$ and $\forall \bm x_j \in \bm S_{-1}, f(\bm x_j) \leq -1$, then $|f(\bm x_n)- f(\bm x_m)| \geq 2$.
\end{proof}

The proposition shows that the Lipschitz margin can be lower bounded by the multiplicative inverse Lipschitz constant.

\begin{figure}
\centering
\includegraphics[width=0.3\textwidth]{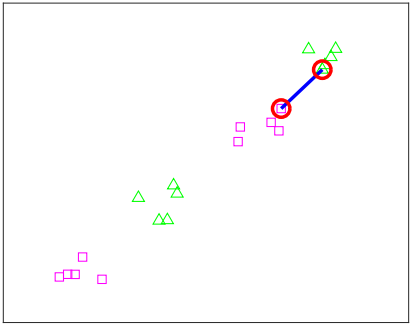}
\caption{An illustration of the Lipschitz margin. Green triangles are instances from the positive class, and purple squares are from the negative class. Data points with red circles around them are the nearest instances from different classes. The length of the blue line indicates the value of the Lipschitz margin.}
\label{margin2_hard}
\end{figure}

The Lipschitz margin is closely related to the margin adopted in SVM (the distance between the hyperplane $\bm H$ and the training instances $\bm S$), 
\begin{equation*}
D(\bm S, {\bm H}_f)= \min\limits_{\bm x_i \in \bm S, \bm h\in {\bm H}_f } \rho(\bm x_i, \bm h),
\end{equation*}
As illustrated in Figure \ref{margin2_hard}, the Lipschitz margin is also suitable for the classification of non-linearly separable classes. The relationship between these two types of margins are described via the following proposition.

\begin{pro}
\label{pro_euclidean}
In the Euclidean space, let $f$ be any continuous function which correctly classifies all the training instances, i.e. $\forall \bm x_i \in \bm S, t_i f(\bm x_i) \geq 1$, then
\begin{equation*}
    D(\bm S_{1}, \bm S_{-1}) \geq 2 D(\bm S, \bm H).
\end{equation*}
\end{pro}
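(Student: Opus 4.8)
The plan is to combine the intermediate value theorem with the fact that in Euclidean space a straight segment is a geodesic, so distance is additive along it. First I would pick the nearest cross-class pair: let $\bm x_n \in \bm S_{1}$ and $\bm x_m \in \bm S_{-1}$ attain $\rho(\bm x_n, \bm x_m) = D(\bm S_{1}, \bm S_{-1})$, which exists because the training sets are finite. By the classification hypothesis $t_i f(\bm x_i) \geq 1$, we have $f(\bm x_n) \geq 1$ and $f(\bm x_m) \leq -1$.

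Next I would join these two points by the straight segment and locate a boundary crossing. Since $f$ is continuous, the map $t \mapsto f\big((1-t)\bm x_n + t\, \bm x_m\big)$ is a continuous real function on $[0,1]$ that is $\geq 1$ at $t=0$ and $\leq -1$ at $t=1$; by the intermediate value theorem there is a point $\bm h$ on the segment with $f(\bm h)=0$, i.e.\ $\bm h \in \bm H_f$.

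The step that actually uses the Euclidean assumption is the additivity $\rho(\bm x_n, \bm x_m) = \rho(\bm x_n, \bm h) + \rho(\bm h, \bm x_m)$, valid because $\bm h$ lies on the straight segment between $\bm x_n$ and $\bm x_m$. Now $\bm h \in \bm H_f$ and both $\bm x_n, \bm x_m \in \bm S$, so each summand is at least $D(\bm S, \bm H_f)$ by the definition of $D(\bm S, \bm H_f)$ as a minimum over all training instances and all boundary points. Adding the two lower bounds gives $D(\bm S_{1}, \bm S_{-1}) = \rho(\bm x_n, \bm x_m) \geq 2\, D(\bm S, \bm H_f)$, which is exactly the claim.

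I expect the main subtlety — more a point to state carefully than a genuine obstacle — to be the additivity of the distance along the segment: this is precisely where the Euclidean (more generally, geodesic length-space) structure is indispensable, since in an arbitrary metric space the zero-crossing produced by the intermediate value theorem need not sit ``between'' the two instances in a distance-additive manner, and the factor of $2$ could then fail. A secondary detail worth spelling out is the continuity of the composed one-dimensional parametrization, which follows immediately from the continuity of $f$ together with the continuity of the affine path $t \mapsto (1-t)\bm x_n + t\, \bm x_m$.
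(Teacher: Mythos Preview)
Your proof is correct and follows essentially the same route as the paper: pick the nearest cross-class pair, use continuity along the connecting segment to locate a point $\bm h$ on the decision boundary, then exploit the additivity of Euclidean distance along the segment to split $\rho(\bm x_n,\bm x_m)$ into two pieces, each of which is bounded below by $D(\bm S,\bm H_f)$. The only cosmetic difference is that the paper phrases the intermediate-value step as ``continuous images of connected sets are connected'' and reaches the factor of $2$ via $\min \leq \tfrac{1}{2}(\text{sum})$ rather than bounding each summand separately, but the substance is identical.
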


\begin{proof}
In the Euclidean space, 
\begin{equation*}
D(\bm S_{1}, \bm S_{-1})= \min\limits_{\bm x_i \in \bm S_{-1}, \bm x_j\in \bm S_{+1} } \rho_E(\bm x_i, \bm x_j),
\end{equation*}
\begin{equation*}
D(\bm S, {\bm H}_f)= \min\limits_{\bm x_i \in \bm S, \bm h\in {\bm H}_f } \rho_E(\bm x_i, \bm h),
\end{equation*}
and $\rho_E(\bm x_i, \bm x_j)=\sqrt{(\bm x_i- \bm x_j)^T (\bm x_i- \bm x_j) }$ is the Euclidean distance.

Let $\bm x_n$ and $\bm x_m$ denote the nearest instances from different classes, i.e.
\begin{equation*}
\rho_E(\bm x_n, \bm x_m)= D(\bm S_{1}, \bm S_{-1})= \min\limits_{\bm x_i \in \bm S_{-1}, \bm x_j\in \bm S_{+1} } \rho_E(\bm x_i, \bm x_j), 
\end{equation*}
where $\bm x_n \in \bm S_{-1}, \bm x_m \in \bm S_{+1}$.

We define a connected set $\bm Z= \{a\bm x_n+ (1-a)\bm x_m | 0 \leq a \leq 1\}$, which indicates the line segment between $\bm x_n$ and $\bm x_m$. Because $f(\bm x_n)\leq -1$, $f(\bm x_m)\geq 1$ and for any continuous function $f$, it maps connected sets into connected sets,  there exists $\bm z\in \bm Z$, such that $f(\bm z) = 0$. According to the definition of ${\bm H}_f$, we can see $\bm z \in {\bm H}_f$.
Therefore,
\begin{equation*}
    \begin{split}
    D(\bm S, {\bm H}_f)& = \min\limits_{\bm x_i \in \bm S, \bm h\in {\bm H}_f } \rho_E(\bm x_i, \bm h)\\
        & \leq \min\limits_{\bm x_i \in \bm S} \rho_E(\bm x_i, \bm z)\\
        & \leq \frac{\rho_E(\bm x_n, \bm z) + \rho_E(\bm x_m, \bm z)}{ 2}\\
        & = \frac{\rho_E(\bm x_n, \bm x_m)}{ 2}\\
        & = \frac{D(\bm S_{1}, \bm S_{-1})}{2},
    \end{split}
\end{equation*}
where the second equality follows from the connectedness property of $\bm Z$.
\end{proof}

\subsection{Lipschitz Margin Ratio}

The Lipschitz margin discussed above effectively depicts the inter-class relationship. However, as we mentioned before, when we learn the metrics, different metrics will result in different intra-class dispersion and it is also important to consider intra-class properties. Hence we propose the Lipschitz margin ratio to incorporate both the inter-class and intra-class properties into metric learning. 

We start with defining the diameter of a metric space:
\begin{defn}
\cite{weaver1999lipschitz} The \emph{diameter} of a metric space $(\mathcal X, \rho)$ is defined as
\begin{equation*}
    \diam (\mathcal X, \rho)  = \sup_{\bm x_i, \bm x_j \in \mathcal X} \rho(\bm x_i, \bm x_j).
\end{equation*}
\end{defn}

The Lipschitz margin ratio is then defined as the ratio between the margin and $\diam(\mathcal X)$ (i.e.~the diameter) or $\diam(\bm  S_{1}) + \diam(\bm  S_{-1})$  (i.e.~the sum of intra-class dispersion), as follows.
\begin{defn}
The \emph{Diameter Lipschitz Margin Ratio} ($\mbox{L-Ratio}^{Diam}$) and the \emph{Intra-Class Dispersion Lipschitz Margin Ratio} ($\mbox{L-Ratio}^{Intra}$) in a metric space $(\mathcal X, \rho)$ are defined as
\begin{equation*}
\begin{split}
    \mbox{L-Ratio}^{Diam} &= \frac{D (\bm S_1, \bm S_{-1})}{\diam(\mathcal X, \rho)}\\ 
    &=\frac{ \min\limits_{\bm x_i \in \bm S_{-1}, \bm x_j\in \bm S_{1} } \rho(\bm x_i, \bm x_j)}{\sup\limits_{\bm x_i, \bm x_j \in \mathcal X} \rho(\bm x_i, \bm x_j)},
\end{split}
\end{equation*}

\begin{equation*}
\begin{split}
    \mbox{L-Ratio}^{Intra} &= \frac{D(\bm S_{1}, \bm S_{-1})}{ \diam(\bm  S_{1}, \rho) + \diam(\bm  S_{-1}, \rho)}\\
    &=\frac{ \min\limits_{\bm x_i \in \bm S_{-1}, \bm x_j\in \bm S_{1} } \rho(\bm x_i, \bm x_j)}{\sup\limits_{\bm x_i, \bm x_j \in \mathcal S_{1}} \rho(\bm x_i, \bm x_j)+\sup\limits_{\bm x_i, \bm x_j \in \mathcal S_{-1}} \rho(\bm x_i, \bm x_j)}.
\end{split}
\end{equation*}
\end{defn}

The relationship between $\mbox{L-Ratio}^{Diam}$ and $\mbox{L-Ratio}^{Intra}$ can be established via the following proposition.

\begin{pro}
\label{pro2}
In a metric space $(\mathcal X, \rho)$, %$1/\mbox{L-Ratio}^{Diam}$ could be upper bounded by $1/\mbox{L-Ratio}^{Intra}+1$:
\begin{equation*}
     \diam(\mathcal X, \rho)  \leq  \diam(\bm  S_{1}, \rho) + \diam(\bm  S_{-1}, \rho)  +D(\bm S_{-1}, \bm S_{1}) 
\end{equation*}
and
\begin{equation*}
\begin{split}
    \frac{1}{\mbox{\quad L-Ratio}^{Diam}} 
    \leq \frac{1}{\mbox{\quad L-Ratio}^{Intra}}+1.
\end{split}
\end{equation*}
\end{pro}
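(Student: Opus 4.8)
The plan is to establish the diameter inequality first, since it is the geometric heart of the proposition, and then obtain the ratio inequality as a one-line algebraic consequence. Throughout I would treat $\mathcal X$ as the union $\bm S_{1} \cup \bm S_{-1}$ of the two training sets, so that every point of the space lies in one class or the other; this is the only reading under which the bound can hold, since the right-hand side involves solely quantities internal to the data.

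For the diameter inequality I would argue pointwise and then take the supremum. Fix arbitrary $\bm x, \bm y \in \mathcal X$. If both lie in the same class $\bm S_k$, then $\rho(\bm x,\bm y) \le \diam(\bm S_k,\rho)$, which is already dominated by the right-hand side because all three of its terms are nonnegative. The only substantive case is when $\bm x$ and $\bm y$ lie in different classes, say $\bm x \in \bm S_{1}$ and $\bm y \in \bm S_{-1}$. Here the key step is to route the triangle inequality through the nearest cross-class pair: let $\bm x_m \in \bm S_{1}$ and $\bm x_n \in \bm S_{-1}$ realize the margin, so that $\rho(\bm x_m,\bm x_n)=D(\bm S_{1},\bm S_{-1})$, exactly as in the proof of Proposition~\ref{margin_lipcons}. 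Then
\[
\rho(\bm x,\bm y) \le \rho(\bm x,\bm x_m) + \rho(\bm x_m,\bm x_n) + \rho(\bm x_n,\bm y),
\]
and bounding the first term by $\diam(\bm S_{1},\rho)$, the middle term by the margin, and the last term by $\diam(\bm S_{-1},\rho)$ yields precisely the claimed bound for this pair. Taking the supremum over all $\bm x,\bm y \in \mathcal X$ then gives $\diam(\mathcal X,\rho) \le \diam(\bm S_{1},\rho)+\diam(\bm S_{-1},\rho)+D(\bm S_{-1},\bm S_{1})$.

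The ratio inequality then follows by dividing the diameter inequality through by the margin $D(\bm S_{1},\bm S_{-1})$, which is strictly positive under the no-overlap assumption introduced earlier in Section~\ref{s:Lip-and-distance}. The left-hand side becomes $1/\mbox{L-Ratio}^{Diam}$, the first term on the right becomes $1/\mbox{L-Ratio}^{Intra}$, and the remaining term equals $D(\bm S_{-1},\bm S_{1})/D(\bm S_{1},\bm S_{-1})=1$ by the symmetry of $\rho$.

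The computation is short, so I do not expect a serious obstacle; the one point that requires care is the choice of intermediate points. Routing through an arbitrary cross-class pair would only produce the weaker bound carrying that pair's distance in place of the margin, so it is essential to route through the minimizing pair $(\bm x_m,\bm x_n)$, which is exactly what allows the margin $D(\bm S_{1},\bm S_{-1})$ itself to appear in the estimate. Reusing the nearest-pair notation from Proposition~\ref{margin_lipcons} keeps the argument consistent with the earlier development.
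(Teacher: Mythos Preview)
Your proposal is correct and follows essentially the same route as the paper's proof: case-split on whether the two diameter-realizing points lie in the same class or in different classes, in the cross-class case route the triangle inequality through the margin-achieving pair $(\bm x_m,\bm x_n)$, and then divide through by $D(\bm S_{1},\bm S_{-1})$ to obtain the ratio inequality. Your explicit remark that the bound only makes sense with $\mathcal X$ read as $\bm S_{1}\cup\bm S_{-1}$ is a useful clarification that the paper's own argument leaves implicit.
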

\begin{proof}: See Appendix \ref{proof_a1}
\end{proof}

In this inequality, $\diam(\bm  S_{1}, \rho)$  and $\diam(\bm  S_{-1}, \rho)$ indicate the maximum intra-class distances, and  $D(\bm S_{1}, \bm S_{-1})$ indicates the inter-class margin.
Therefore, this inverse margin ratio penalty will push the learner to select a metric $\rho$ which pulls the instances from the same class closer (small $\sum_{t=1, -1} \diam(\bm  S_{t}, \rho)$) and enlarges the margin between the instances from different classes (large $D(\bm S_{1},\bm S_{-1} )$). 
In a very simple (linearly separable one-dimensional) case, as illustrated in Figure \ref{margin_ratio_line},  $\diam(\mathcal X, \rho)$ can be decomposed into intra-class dispersion ($\diam(\bm  S_{-1}, \rho)$, $\diam(\bm  S_{-1}, \rho)$) and inter-class margin ($D(\bm S_{1}, \bm S_{-1})$) directly.

\begin{figure}
\centering
\includegraphics[width=0.5\textwidth]{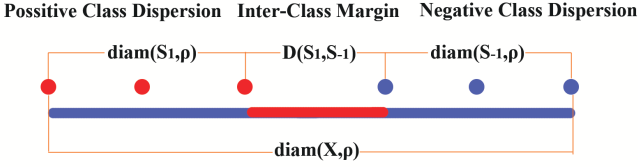}
\caption{An illustration of the relationship between the margin ratio and the intra-/inter-class properties using the indicative linearly separable one-dimensional case as an example. The red solid circles indicate the positive class instances; the blue solid circles indicate the negative class instances.}
\label{margin_ratio_line}
\end{figure}

Then we can bound the Lipschitz margin ratio using the Lipschitz constant and the diameter of metric space:
%%%%%%%%
\begin{pro}
For any $L$-Lipschitz function $f$ satisfying $\forall \bm x_i \in \bm S_{1}, f(\bm x_i) \geq 1$ and $\forall \bm x_j \in \bm S_{-1}, f(\bm x_j) \leq -1$,
\begin{equation*}
    \mbox{L-Ratio}^{Diam} \geq  \frac{2}{L  \diam(\mathcal X, \rho)}, 
\end{equation*}
\begin{equation*}
    \mbox{L-Ratio}^{Intra} \geq  \frac{2}{L  \diam(\bm  S_{1}, \rho) + L \diam(\bm  S_{-1}, \rho) }.
\end{equation*}
\end{pro}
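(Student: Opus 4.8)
The plan is to obtain both bounds as immediate consequences of Proposition~\ref{margin_lipcons}, since the two margin ratios share the same numerator $D(\bm S_1, \bm S_{-1})$, which is exactly the quantity that Proposition~\ref{margin_lipcons} lower-bounds. The only additional observations needed are that $f$ being an $L$-Lipschitz function means $L(f) \leq L$ (by the definition of the $L$-Lipschitz set), and that dividing a valid lower bound by a fixed positive denominator preserves the inequality.

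First I would invoke Proposition~\ref{margin_lipcons}: for any $L$-Lipschitz $f$ satisfying $f(\bm x_i) \geq 1$ on $\bm S_1$ and $f(\bm x_j) \leq -1$ on $\bm S_{-1}$, we have $D(\bm S_1, \bm S_{-1}) = \mbox{L-Margin} \geq 2/L(f)$. Then, because $f \in L\mbox{-}Lip(\mathcal X)$ forces $L(f) \leq L$, monotonicity of $1/(\cdot)$ gives $2/L(f) \geq 2/L$, hence $D(\bm S_1, \bm S_{-1}) \geq 2/L$. (If the intended reading is simply that $L$ denotes $L(f)$, this intermediate step collapses and the bound is already tight.)

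Next I would divide this bound by the relevant diameter. For the diameter version, dividing both sides of $D(\bm S_1, \bm S_{-1}) \geq 2/L$ by $\diam(\mathcal X, \rho) > 0$ yields
\begin{equation*}
    \mbox{L-Ratio}^{Diam} = \frac{D(\bm S_1, \bm S_{-1})}{\diam(\mathcal X, \rho)} \geq \frac{2}{L\,\diam(\mathcal X, \rho)}.
\end{equation*}
For the intra-class version, I would instead divide by $\diam(\bm S_1, \rho) + \diam(\bm S_{-1}, \rho)$, which is likewise positive, giving
\begin{equation*}
    \mbox{L-Ratio}^{Intra} = \frac{D(\bm S_1, \bm S_{-1})}{\diam(\bm S_1, \rho) + \diam(\bm S_{-1}, \rho)} \geq \frac{2}{L\,\diam(\bm S_1, \rho) + L\,\diam(\bm S_{-1}, \rho)}.
\end{equation*}

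There is no real obstacle here: the result is essentially a restatement of Proposition~\ref{margin_lipcons} normalized by the chosen notion of scale. The only point requiring a moment of care is guarding against a degenerate zero denominator (a trivial metric space collapsing $\diam$ to $0$), which one should either exclude by assumption or dismiss as making the right-hand side infinite and the claim vacuous; apart from that, each bound follows by a single division.
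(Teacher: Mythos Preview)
Your proposal is correct and follows essentially the same approach as the paper: the paper's proof is the single sentence ``The inequalities can be obtained by substituting the result of Proposition~\ref{margin_lipcons},'' and you have simply spelled out that substitution. Your extra care in distinguishing $L(f)$ from $L$ and in flagging the degenerate zero-diameter case are reasonable refinements, but the core argument is identical.
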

\begin{proof} 
% \begin{equation*}
% \begin{split}
%      \mbox{L-Ratio}^{Diam} &= \frac{\mbox{L-Margin}}{\diam(\mathcal X, \rho)}\\
%      &  \geq  \frac{2}{L  \diam(\mathcal X, \rho)},
% \end{split}
% \end{equation*}
% where the inequality is the result of Proposition \ref{margin_lipcons}.
The inequalities can be obtained by substituting the result of Proposition \ref{margin_lipcons}.
\end{proof}

Based on this proposition, although it is not possible to calculate the exact value of the Lipschitz margin ratio in most cases, we can use $\frac{1}{L  \diam(\mathcal X, \rho)}$ or $\frac{1}{L  \diam(\bm  S_{1}, \rho) + L \diam(\bm  S_{-1}, \rho) }$ as a surrogate.  For example, in the objective function of metric learning by maximizing Lipschitz margin ratio, we can maximize  $\frac{1}{L  \diam(\mathcal X, \rho)}$ or $\frac{1}{L  \diam(\bm  S_{1}, \rho) + L \diam(\bm  S_{-1}, \rho) }$ or equivalently minimize $ L  \diam(\mathcal X, \rho)$ or $L ( \diam(\bm  S_{1}, \rho) + \diam(\bm  S_{-1}, \rho))$.

Furthermore, in some cases we may be more interested in the local properties rather than the global ones (see also Section 4.2). In those cases we can define the {\it local} Lipschitz margin ratio as follows.

\begin{defn}
\label{local_inverse}
The \emph{local Lipschitz margin ratio} with subset $\bm S^l \subseteq \bm S$ and metric $\rho^l \in \mathcal D$  is defined as
\begin{equation*}
    \mbox{Local-Ratio}^{Diam} = \frac{\mbox{L-Margin}}{\diam(\bm S^l, \rho^l) } =  \frac{D(\bm S^l_{1}, \bm S^l_{-1})}{\diam(\bm S^l, \rho^l) },
\end{equation*}
\begin{equation*}
\begin{split}
    \mbox{Local-Ratio}^{Intra}& = \frac{\mbox{L-Margin}}{ \diam(\bm  S_{1}, \rho) + \diam(\bm  S_{-1}, \rho) } \\
    &=  \frac{D(\bm S^l_{1}, \bm S^l_{-1})}{\diam(\bm S^l_{1}, \rho^l)+\diam(\bm S^l_{-1}, \rho^l)},
\end{split}    
\end{equation*}
where $\bm S^l_{k}=\{\bm x_i | t_i = k, \bm x_i \in \bm S^l\}$ indicates the local training set of class $k$ and $k\in\{1,-1\}$.
\end{defn}

\subsection{Learning Bounds of the Lipschitz Margin Ratio}

In the section above, we have defined the Lipschitz margin ratio, which is a measure of model complexity. In this section, we shall establish the effectiveness of the Lipschitz margin ratio through showing the relationship between its lower bound and the generalization ability. 

%%%%The Lipschitz margin ratio appears in the learning bounds of paper \cite{gottlieb2014efficient}.

\begin{defn}
\cite{gottlieb2014efficient} 
For a metric space $(\mathcal X, \rho)$, let $\lambda$ be the smallest number such that every ball in $\mathcal X$
can be covered by $\lambda$ balls of half the radius. Then $\lambda$ is called the \emph{doubling constant} of $\mathcal X$ and the \emph{doubling dimension} of
$\mathcal X$ is $\ddim(\mathcal X) = \log_2 \lambda $.
%A metric is doubling when its doubling dimension is bounded.
\end{defn}
As presented in \cite{gottlieb2014efficient}, a low Euclidean dimension implies a low doubling dimension (Euclidean metrics of dimension $d$ have doubling dimension $O(d)$); a low doubling dimension is more general than a low Euclidean dimension and can be utilized to measure the `dimension' of a general metric space.

\begin{defn}
We say that $\mathcal F$ \emph{$\gamma$-shatters} $\bm x_1, \dots , \bm x_n$, if there exists witness $s_1,\dots, s_n$, such that, for every $\epsilon \in\{\pm 1\}^n$, there exists $f\in \mathcal F$ such that $\forall t \in \{1,\dots, n\}$
\begin{equation*}
    \epsilon_t (f_{\epsilon}(\bm x_t)-s_t) \geq \gamma
\end{equation*}
\emph{Fat-shattering dimension} is defined as follows
\begin{equation*}
\begin{split}
    fat_{\gamma}(\mathcal{F}) =&\max \{n; \exists \bm x_1, \dots, \bm x_n\in \mathcal X, \\
    &s.t.\mbox{ } \mathcal F\mbox{ } \gamma \mbox{ shatters } \bm x_1, \dots, \bm x_n\}.
\end{split}
\end{equation*}
\end{defn}

\begin{thm}
\label{Dtheorem}
\cite{gottlieb2014efficient} 
%Let metric space $(\mathcal X, \rho)$ have doubling dimension $\ddim(\mathcal X)$ and 
Let $\mathcal F$ be the collection of real valued functions over $\mathcal X$ with the Lipschitz constant at most $L$. Define $D= \fat_{1/16}(\mathcal{F})$ and let $P$ be some probability distribution on $\mathcal{X} \times \{-1,1\}$. Suppose that $(x_i, t_i), i=1,\dots, n$ are drawn from $\mathcal{X} \times \{-1,1\}$ independently according to $P$.  Then for any $f \in \mathcal F$ that classifies a sample of size $n$ correctly, we have with probability at least $1-\delta$
\begin{equation*}
\begin{split}
    &P\{(\bm x, t): \sign[f(\bm x)] \neq t\} \\
    &\leq \frac{2}{n} (D\log_2 (34 en/D) \log_2 (578n) + \log_2 (4/\delta)).
\end{split}
\end{equation*}
Furthermore, if $f$ is correct on all but $k$ examples, we have with probability at least $1-\delta$
\begin{equation}
\label{Lipschitz_bound_1}
\begin{split}
    &P\{(\bm x, t): \sign [f(\bm x)] \neq t\} \\ 
    &\leq \frac{k}{n}+ \sqrt{ \frac{2}{n} (D\log_2 (34 en/D) \log_2 (578n) + \log_2 (4/\delta))}.
\end{split}
\end{equation}
%where $c= 8(L \diam(\mathcal X, \rho))^{\ddim(\mathcal X)+1}. $
%\geq 8 (\frac{2}{\mbox{L-Ratio}})^{\ddim(\mathcal X)+1}$.
\end{thm}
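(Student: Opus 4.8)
The plan is to recognise this as the classical large-margin generalization bound of the Anthony--Bartlett type, in which the complexity of $\mathcal{F}$ is measured through its fat-shattering dimension and converted into a uniform convergence statement via covering numbers. Because the functions under consideration separate the two classes with a normalized unit margin (the operative conditions being $f(\bm x_i) \geq 1$ and $f(\bm x_j) \leq -1$ that recur throughout the paper), the governing scale is $\gamma = 1$, and the quantity that ultimately controls the bound is the fat-shattering dimension at a fixed constant fraction of this scale, namely $\fat_{1/16}(\mathcal{F}) = D$.

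First I would perform the standard symmetrization step: introduce a ghost sample of size $n$ and bound the probability that some $f\in\mathcal{F}$ classifies the training sample correctly with margin $\gamma$ yet errs on a large fraction of the ghost sample. A permutation argument over the $2n$ points of the double sample then reduces the problem to counting the number of distinguishable margin-labellings that $\mathcal{F}$ can induce on a fixed set of $2n$ points, so that the true error is dominated by a purely combinatorial quantity.

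Next I would bound this count by a covering number. Covering $\mathcal{F}$ in the $\ell_\infty$ metric on the double sample at resolution $\gamma/2$ ensures that any two functions in the same cell agree up to $\gamma/2$ on every point, so that a point classified with margin $\gamma$ by $f$ is still classified correctly by its cell representative; hence the number of distinct behaviours is at most $\mathcal{N}_\infty(\gamma/2, \mathcal{F}, 2n)$. The heart of the argument is the combinatorial bound relating this covering number to the fat-shattering dimension (the Alon--Ben-David--Cesa-Bianchi--Haussler estimate and its refinements): one obtains $\log_2 \mathcal{N}_\infty(\gamma/2, \mathcal{F}, 2n) \leq D \log_2(34 e n / D)\log_2(578 n)$ with $D = \fat_{1/16}(\mathcal{F})$, where the passage from the covering scale $\gamma/2$ down to the fat-shattering scale $\gamma/16 = 1/16$ costs a further constant factor, and the explicit numbers $34$ and $578$ emerge from tracking the multiplicative constants in that reduction. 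Combining this cardinality bound with a Hoeffding-type tail estimate for each fixed cell and a union bound over the cells yields the realizable bound after solving for $\delta$.

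Finally, the agnostic extension, where $f$ is correct on all but $k$ points, follows by applying the same symmetrization and covering machinery to the empirical margin error rather than to the zero-error event; the extra $k/n$ term records the training misclassifications, while the square-root form replaces the fast $1/n$ rate because the relevant concentration inequality for a nonzero empirical error decays only at the $\sqrt{1/n}$ scale. I expect the principal obstacle to be the covering-number-to-fat-shattering reduction, since extracting the sharp constants $34$, $578$ and the scale $1/16$ requires delicate combinatorial bookkeeping rather than a one-line estimate; the remaining ingredients are assembled from standard symmetrization and union-bound components.
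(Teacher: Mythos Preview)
Your sketch is a faithful outline of the standard Anthony--Bartlett/Alon--Ben-David--Cesa-Bianchi--Haussler route to large-margin bounds via fat-shattering dimension, and it is the argument one would expect to find in the original reference. However, there is nothing to compare against: the paper does not prove this theorem at all. Theorem~\ref{Dtheorem} is stated with the citation \cite{gottlieb2014efficient} immediately after the theorem header, and the paper simply imports it as a known result without supplying any argument. It is used only as a black box into which the authors substitute their Proposition~\ref{Dbounds} bound on $D$ to obtain the subsequent corollary.

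So your proposal is not wrong, but it is answering a question the paper never poses. If your goal is to reproduce the paper's treatment of this statement, the correct ``proof'' is a one-line citation. If your goal is to supply the missing derivation for completeness, your plan is essentially the right one, with the caveat that the specific constants $34$, $578$, and the scale $1/16$ trace back through \cite{gottlieb2014efficient} to the covering-number estimates in Bartlett--Shawe-Taylor and Alon et al., and reproducing them exactly will require consulting those sources rather than deriving them from scratch.
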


\begin{pro}
\label{Dbounds}
In classification problems, when $t_i \in \{-1,1\} $, $L =\sup_{i,j}\frac{2}{\rho(x_i, x_j)}$, then $D= \fat_{1/16}(\mathcal{F})$ can be bounded by the surrogate of Lipschitz Margin Ratio as follows:
\begin{equation}
\begin{split}
   D & \leq \Big{(} 16 L \diam(\mathcal X, \rho) \Big{)}^{\ddim(\mathcal{X})} \\
   & \leq\Big{(} 16 L  (\diam(\bm  S_{1}, \rho) + \diam(\bm  S_{-1}, \rho) ) + 32 \Big{)}^{\ddim(\mathcal{X})}.
\end{split}
\end{equation}
\end{pro}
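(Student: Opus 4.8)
The plan is to prove the two inequalities separately. The first is a geometric bound on the fat-shattering dimension of an $L$-Lipschitz class obtained by a packing argument inside the doubling space, and the second is an algebraic consequence of Proposition~\ref{pro2} together with the definition of $L$.

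For the first inequality, I would start from the defining property of $\gamma$-shattering with $\gamma=1/16$. Suppose $\bm x_1,\dots,\bm x_D$ are $\gamma$-shattered by $\mathcal F$ with witnesses $s_1,\dots,s_D$. Fixing any pair $\bm x_i,\bm x_j$ and selecting the two labelings that assign opposite signs to this pair, the shattering condition produces one $f\in\mathcal F$ with $f(\bm x_i)-f(\bm x_j)\ge s_i-s_j+2\gamma$ and another with $f(\bm x_j)-f(\bm x_i)\ge s_j-s_i+2\gamma$. Combining each inequality with the Lipschitz bound $|f(\bm x_i)-f(\bm x_j)|\le L\rho(\bm x_i,\bm x_j)$ and adding the two yields $\rho(\bm x_i,\bm x_j)\ge 2\gamma/L$. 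Hence every $\gamma$-shattered set is $(2\gamma/L)$-separated and lies inside a ball of radius $\diam(\mathcal X,\rho)$.

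Next I would turn this separation into a cardinality bound through the doubling constant $\lambda=2^{\ddim(\mathcal X)}$. Covering the enclosing ball repeatedly by balls of half the radius, after $\lceil\log_2(16L\diam(\mathcal X,\rho))\rceil$ halving rounds each covering ball has diameter below the separation $2\gamma/L=1/(8L)$, so it contains at most one shattered point; since each round multiplies the number of balls by $\lambda$, we obtain $D\le\big(16L\diam(\mathcal X,\rho)\big)^{\ddim(\mathcal X)}$, which is exactly the Lipschitz fat-shattering estimate of \cite{gottlieb2014efficient} specialized to $\gamma=1/16$. For the second inequality I would invoke the triangle-type bound of Proposition~\ref{pro2}, namely $\diam(\mathcal X,\rho)\le\diam(\bm S_1,\rho)+\diam(\bm S_{-1},\rho)+D(\bm S_1,\bm S_{-1})$, and multiply through by $16L$. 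Because $L$ is taken to be the least Lipschitz constant interpolating the $\pm1$ labels, it equals $2/D(\bm S_1,\bm S_{-1})$, so $16L\,D(\bm S_1,\bm S_{-1})=32$ and the cross-class term collapses to the additive constant $32$. Monotonicity of $x\mapsto x^{\ddim(\mathcal X)}$ on $[0,\infty)$ then lifts the inequality to the exponent and chains it onto the first bound.

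The main obstacle is the constant-tracking in the first inequality: the separation-to-packing conversion must be arranged so that the ceiling in the number of halving rounds does not inflate the base past $16L\diam(\mathcal X,\rho)$ and so that the exponent stays exactly $\ddim(\mathcal X)$ rather than $\ddim(\mathcal X)+1$. The cleanest route is to align the computation with the precise statement of the fat-shattering lemma in \cite{gottlieb2014efficient} and simply substitute $\gamma=1/16$, which leaves the identity $16L\,D(\bm S_1,\bm S_{-1})=32$ as the only genuinely new piece of algebra.
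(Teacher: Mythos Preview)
Your proposal is correct and follows essentially the same route as the paper: the first inequality is the fat-shattering bound from \cite{gottlieb2014efficient} (the paper simply cites it, while you additionally sketch the packing argument behind it), and the second inequality is obtained exactly as in the paper by combining Proposition~\ref{pro2} with the identity $L\,D(\bm S_1,\bm S_{-1})=2$ and then passing to the $\ddim(\mathcal X)$-th power by monotonicity.
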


\begin{proof}
The first inequality has been proved in~\cite{gottlieb2014efficient}. We prove the second inequality here. Because $L =\sup_{i,j}\frac{2}{\rho(x_i, x_j)} = \frac{2}{D(\bm S_{-1}, \bm S_{1})}$, we have
\begin{equation*}
    L D(\bm S_{-1}, \bm S_{1})=2.
\end{equation*}
It follows that
\begin{equation*}
\begin{split}
    L \diam(\mathcal X, \rho) &\leq L ( \diam(\bm  S_{1}, \rho) + \diam(\bm  S_{-1}, \rho) +D(\bm S_{-1}, \bm S_{1}) )\\
    & =  L (( \diam(\bm  S_{1}, \rho) + \diam(\bm  S_{-1}, \rho) ) + 2,
\end{split}
\end{equation*}
where the first inequality is based on Proposition \ref{pro2}.
Meanwhile, because $\ddim(\mathcal{X}) \geq 1$, the second inequality holds. 
\end{proof}

\begin{cor}
Under the condition that $n \geq \frac{D}{34e}$, the following bounds for the surrogate margin ratios holds.
% For any $f \in \mathcal F$ that classifies a sample of size $n$ correctly, we have with probability at least $1-\delta$
% \footnotesize
% \begin{equation*}
% \begin{split}
%     &P\{(\bm x, t): \sign[f(\bm x)] \neq t\} \\
%     &\leq \frac{2}{n} ((16C)^{\ddim(\mathcal{X})}\log_2 (34 en/(16C)^{\ddim(\mathcal{X})}) \log_2 (578n) + \log_2 (4/\delta)).
% \end{split}
% \end{equation*}
% \normalsize
% Furthermore, 
If $f$ is correct on all but $k$ examples, we have with probability at least $1-\delta$
\footnotesize
\begin{equation}
\label{Lipschitz_bound_2}
\begin{split}
    &P\{(\bm x, t): \sign [f(\bm x)] \neq t\} \leq \frac{k}{n}+\\ 
    & \sqrt{ \frac{2}{n} ((16C)^{\ddim(\mathcal{X})} \log_2 (34 en/(16C)^{\ddim(\mathcal{X})}) \log_2 (578n) + \log_2 (4/\delta))},
\end{split}
\end{equation}
\normalsize
where $C =  L \diam(\mathcal X, \rho)$ or $C =  L ( \diam(\bm  S_{1}, \rho) + \diam(\bm  S_{-1}, \rho) ) +2 $. 
\end{cor}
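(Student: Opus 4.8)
The plan is to derive \eqref{Lipschitz_bound_2} directly from the generalization bound \eqref{Lipschitz_bound_1} of Theorem~\ref{Dtheorem} by plugging in the surrogate upper bound on $D=\fat_{1/16}(\mathcal F)$ furnished by Proposition~\ref{Dbounds}. The key observation is that, on the right-hand side of \eqref{Lipschitz_bound_1}, the dependence on $D$ is isolated entirely in the quantity $\phi(D):=D\log_2(34en/D)$: the factor $\log_2(578n)$, the additive term $\log_2(4/\delta)$, the prefactor $2/n$, the outer square root and the offset $k/n$ are all either free of $D$ or order-preserving. Consequently it suffices to show that replacing $D$ by the larger quantity $\bar D:=(16C)^{\ddim(\mathcal X)}$ can only increase $\phi$, after which the two admissible choices $C=L\diam(\mathcal X,\rho)$ and $C=L(\diam(\bm S_{1},\rho)+\diam(\bm S_{-1},\rho))+2$ from Proposition~\ref{Dbounds} yield the two forms of the corollary.

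First I would settle the monotonicity of $\phi$. Differentiating gives $\phi'(D)=\tfrac{1}{\ln 2}\ln(34n/D)$, so $\phi$ is strictly increasing on $(0,34n)$, peaks at $D=34n$, and decreases thereafter. Hence, provided both the true $D$ and the surrogate $\bar D$ lie in the increasing regime, the inequality $D\le\bar D$ of Proposition~\ref{Dbounds} gives $\phi(D)\le\phi(\bar D)=\bar D\log_2(34en/\bar D)$. Substituting this bound for $\phi(D)$ inside the square root of \eqref{Lipschitz_bound_1} and writing $\bar D=(16C)^{\ddim(\mathcal X)}$ reproduces \eqref{Lipschitz_bound_2} verbatim; since Proposition~\ref{Dbounds} supplies the bound $D\le\bar D$ for each of the two expressions of $C$, both versions of the corollary follow simultaneously.

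The delicate point, which I expect to be the main obstacle, is justifying the monotone substitution under the stated hypothesis $n\ge D/(34e)$. That hypothesis is equivalent to $\bar D\le 34en$, which only guarantees that $\log_2(34en/\bar D)\ge 0$, i.e.\ that the surrogate complexity term is nonnegative; it does \emph{not} by itself place $\bar D$ below the maximizer $34n$ of $\phi$, and indeed $\phi$ turns decreasing on $(34n,34en)$, so the naive substitution can fail there. I would therefore either strengthen the regime check to $\bar D\le 34n$ (equivalently $n\ge D/34$), under which the argument above is airtight, or argue the substitution directly within the interval permitted by the hypothesis. Once this monotonicity/regime issue is resolved, the remainder is a purely mechanical insertion of Proposition~\ref{Dbounds} into Theorem~\ref{Dtheorem}, and the corollary is immediate for both choices of $C$.
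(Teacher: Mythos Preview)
Your approach is exactly the paper's: the paper's entire proof reads ``Substitute the inequalities of Proposition~\ref{Dbounds} into Theorem~\ref{Dtheorem}.'' So on the level of strategy you match the paper.

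Your analysis is in fact more careful than the paper's. The monotonicity concern you flag---that $\phi(D)=D\log_2(34en/D)$ increases only on $(0,34n)$ and then decreases on $(34n,34en)$, so the hypothesis $n\ge D/(34e)$ (i.e.\ $\bar D\le 34en$) does not by itself justify replacing $D$ by the larger surrogate $\bar D$---is a genuine subtlety that the paper's one-line proof simply ignores. In other words, the ``obstacle'' you anticipate is not something the paper resolves; it glosses over it. Your suggested fix (tightening the regime to $n\ge \bar D/34$, or otherwise arguing the substitution directly) is the right instinct, and is more than the paper itself provides.
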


\begin{proof}
Substitute the inequalities of Proposition \ref{Dbounds} into Theorem \ref{Dtheorem}. 
%%%the condition is to guarantee $D_1 \leq D_2$, then $f(D_1) \leq f(D_2)$.  
\end{proof}

% The above learning bound can illustrate the relationship between the surrogate inverse Lipschitz margin ratio $L \diam(\mathcal X, \rho)$ or $L ( \diam(\bm  S_{1}, \rho) + \diam(\bm  S_{-1}, \rho) )$  and the generalization ability, which indicates the gap between empirical error $\frac{k}{n}$ and expected error $P\{(\bm x, t): \sign [f(\bm x)] \neq t\}$.  Reducing the value of surrogate inverse Lipschitz margin ratio  would help reduce the gap between the empirical error and the expected error, which implies the generalization ability of the model would be better. Therefore, based on the learning bound, we can see that minimizing inverse Lipschitz margin ratio would be an effective way to improve the generalization ability and control the model complexity.\\
The above learning bound illustrates the relationship between the generalization error (i.e. the difference between the expected error $P\{(\bm x, t): \sign [f(\bm x)] \neq t\}$ and the empirical error $\frac{k}{n}$) and the surrogate inverse Lipschitz margin ratio $L \diam(\mathcal X, \rho)$ or $L ( \diam(\bm  S_{1}, \rho) + \diam(\bm  S_{-1}, \rho) )$.  Therefore, reducing the value of surrogate inverse Lipschitz margin ratio would help reduce the gap between the empirical error and the expected error, which implies an improvement in the generalization ability of the model. In other words, the learning bound indicates that minimizing inverse Lipschitz margin ratio would be an effective way to enhance the generalization ability and control model complexity.

\section{Metric Learning via Maximizing the Lipschitz Margin Ratio}\label{s:framework}

From previous sections, we have seen that Lipschitz functions have the following desirable properties relevant to metric learning:
\begin{itemize}
  \item (Close relationship with metrics) The definitions of the Lipschitz constant, Lipschitz functions and Lipschitz extensions have natural relationship with metrics.
  \item (Strong representation ability) Lipschitz functions, in particular Lipschitz extensions, could obtain small empirical risks, and hence illustrate the representational capability of Lipschitz functions.
  \item (Good generalization ability) Complexity of Lipschitz functions could be controlled by penalizing the Lipschitz margin ratio.
\end{itemize}

Therefore, it is reasonable for us to conduct metric learning with the Lipschitz functions and control the model complexity by maximizing (the lower bound of) the Lipschitz margin ratio.

\subsection{Learning Framework}

Similarly to other structure risk minimization approaches, we minimize the empirical risk and maximize (the lower bound of) the Lipschitz margin ratio in the proposed framework. To estimate (the lower bound of) the Lipschitz margin ratio, we may either 
\begin{itemize}
  \item use training instances to estimate the Lipschitz constant $L(f)$ and the diameters ${\rm diam}(\mathcal X, \rho)$, and obtain  $\hat L$ and $\hat \diam$; or
  \item adopt the upper bounds of $L$ and ${\rm diam}(\mathcal X, \rho)$ by applying the properties of the classifier $f$ and metric space $(\mathcal X, \rho)$, and obtain $L^s$ and $\diam^s$.
\end{itemize}
The optimization problem could be formulated as follows:
\begin{equation}
\label{frame}
\begin{array}{cc}
\min\limits_{\bm \xi, \bm a, \rho}&  1/\mbox{L-Ratio} + \alpha \sum_{i=1}^N \xi_i \\
s.t. & t_i f(\bm x_i; \bm a, \rho) \geq 1- \xi_i\\
     & \xi_i \geq 0\\
     & i= 1, \dots, N,
\end{array}
\end{equation}
where $N$ indicates the number of training instances; $\bm a$ denotes the parameters of the classification function $f$; $\bm \xi=\{\xi_i\}$ is the hinge loss;  $\alpha>0$ is a trade-off parameter which balances the empirical risk term $\sum_{i=1}^N \xi_i$  and the generalization ability term $1/\mbox{L-Ratio}$. $L(f)$ and $\diam(\mathcal X, \rho)$, $\diam(\bm  S_{1}, \rho)$ and $\diam(\bm  S_{-1}, \rho)$， from the $\mbox{L-Ratio}$ term, will be replaced by either the empirically estimated values $\hat L$ and $\hat \diam$ or the theoretical upper bounds $L^s$ and $\diam^s$.

Empirical estimates of $\hat L$ and $\hat \diam$ can be added as constraints
\begin{equation*}
\frac{f(\bm x_i; \bm a, \rho) - f(\bm x_j; \bm a, \rho) }{\rho (\bm x_i, \bm x_j)} \leq \hat L, 
\end{equation*}
\begin{equation*}
\begin{split}
{\rho (\bm x_i, \bm x_j)} &\leq \hat \diam(\mathcal X, \rho), \quad \mbox{where } x_i \in \bm S, x_j \in \bm S,\\
{\rho (\bm x_i, \bm x_j)} &\leq \hat \diam(\bm  S_{1}, \rho), \quad \mbox{where } x_i \in \bm S_{1}, x_j \in \bm S_{1},\\
{\rho (\bm x_i, \bm x_j)} &\leq \hat \diam(\bm  S_{-1}, \rho),\quad \mbox{where } x_i \in \bm S_{-1}, x_j \in \bm S_{-1}.
\end{split}
\end{equation*}

Then the objective function of minimizing $1/\mbox{L-Ratio}^{Diam}$ becomes 
\begin{equation}
\label{diam}
\begin{array}{cc}
\min\limits_{\bm \xi, \bm a, \rho, \hat L, \hat \diam}&  \hat L   \hat{\diam}(\mathcal X, \rho) + \alpha \sum_{i=1}^N \xi_i ,
\end{array}
\end{equation}
where the penalty of $\hat L   \hat{\diam}(\mathcal X, \rho)$ tries to maximize the inter-class margin (via minimizing $\hat L$) and minimize the overall diameter (via minimizing $\hat{\diam}(\mathcal X, \rho) $). 

The objective function to minimize $1/\mbox{L-Ratio}^{Intra}$ becomes 
\begin{equation*}
%\begin{array}{cc}
\min\limits_{\bm \xi, \bm a, \rho, \hat L, \hat \diam}  \hat L  (\hat \diam(\bm  S_{1}, \rho) +  \hat \diam(\bm  S_{-1}, \rho)) + \alpha \sum_{i=1}^N \xi_i,
%\end{array}
\end{equation*}
or we can minimize an upper bound of $1/\mbox{L-Ratio}^{Intra}$ as
\begin{equation}
\label{intra}
%\begin{array}{cc}
\min\limits_{\bm \xi, \bm a, \rho, \hat L, \hat \diam}  2\hat L  \max(\hat \diam(\bm  S_{1}, \rho) , \hat \diam(\bm  S_{-1}, \rho)) + \alpha \sum_{i=1}^N \xi_i,
%\end{array}
\end{equation}
where the penalty terms of $L  (\hat \diam(\bm  S_{1}, \rho) +  \diam(\bm  S_{-1}, \rho)) $ or $\hat L  \max(\hat \diam(\bm  S_{1}, \rho) , \hat \diam(\bm  S_{-1}, \rho))$ tries to maximize the inter-class margin (via minimizing $\hat L$) and minimize the intra-class dispersion (via minimizing $\hat \diam(\bm  S_{1}, \rho) +  \hat \diam(\bm  S_{-1}, \rho)$ or $\max(\hat \diam(\bm  S_{1}, \rho) , \hat \diam(\bm  S_{-1}, \rho))$) at the same time.

\subsection{Relationship with other Metric Learning Methods}
Some widely adopted metric learning algorithms can be shown as special cases of the proposed framework. 

As presented in Appendix~\ref{appendix:LMML}, based on our framework, the penalty term of LMML \cite{schultz2004learning} could be interpreted as an upper bound of $1/\mbox{L-Ratio}^{Diam}$ margin ratio; and this framework could suggest a reasonable strategy for choosing the target neighbors and the imposter neighbors in LMML. 
%%%%
Also as discussed in Appendix~\ref{appendix:LMNN}, we can see that the penalty term of LMNN \cite{weinberger2009distance} could be interpreted as an upper bound of  $1/\mbox{Local-Ratio}^{Intra}$.

%SVM may be understood as a special case of the above %framework which adopt theoretical upper bound $L^s, %\diam^s$. It is because when using linear classifier %$f(\bm x; \bm a) = \bm x^T \bm a -b$, $\rho$ is always the %Euclidean distance and we have
%\begin{equation*}
%\begin{split}
%    L(f) %&= \frac{|f(\bm x_1; \bm a)-f(\bm x_2; \bm %a)|}{|\bm x_1 - \bm x_2|_2}\\
%    &=\frac{|\bm a^T (\bm x_1 - \bm x_2)|}{|\bm x_1 - \bm %x_2|_2} \\
%    &\leq \frac{|\bm a|_2 |(\bm x_1 - \bm x_2)|_2}{|\bm %x_1 - \bm x_2|_2} (\mbox{H\"older inequality}) \\
%    & = |\bm a|_2,
%\end{split}
%\end{equation*}
%i.e. $L(f)$ is upper bounded by $L^s = |\bm a|_2$. %Meanwhile, $\diam(\mathcal X, \rho)$ is a constant $C>0$ %when always adopting the Euclidean distance. Substituting %$L(f)$ and $\diam(\mathcal X, \rho)$ in the above formula %with $L^s = |\bm a|_2$ and $\diam(\mathcal X, \rho) = C$, %the minimizing of $C|\bm a|_2  + \alpha \sum_{i=1}^n \xi_i %$ just give rise to the optimization objective function of %SVM. 

%\begin{table}[!b]
%\caption{Binary class datasets used in our experiment\footnote{Some meaningless dimension, such as ID, has %been ignored. }}
%\centering
%\begin{tabular}{|c|c|c|c|}
%\hline
%dataset  &features&instances\\ \hline
%Cancer (UCI)& 9& 699 \\ \hline
%%Fourclass (Libsvm)  &2 & 862\\ \hline
%Haberman (UCI) &3 & 306 \\ \hline
%Liver (UCI) &6 &345\\ \hline
%Pima (UCI) &8  &768\\ \hline
%planning & UCI/ Voting & 16 & 435 \\ \hline
%\end{tabular}
%\label{source}
%\end{table}

\subsection{Applying the Framework for Learning the Squared Mahalanobis Metric}\label{ss:special}

We now apply the proposed framework to learn the squared Mahalanobis metric,
\begin{equation*}
    \rho_{\bm M} (\bm x_i, \bm x_j) = (\bm x_i - \bm x_j)^T \bm M (\bm x_i - \bm x_j), \bm M \in \bm M_{+},
\end{equation*}
where $\bm M_{+}$ is the set of positive semi-definite matrices. 
A Lipschitz extension function is selected as the classifier:
\begin{equation}
\begin{split}
  f(\bm x; \bm a, \rho) =& 
  U_{1/2}(\bm x) \\ 
  = &\frac{1}{2}\min_{i=1,\dots,N} (a_i + L \rho_{\bm M}(\bm x,\bm x_i)) + \\
  & \frac{1}{2} \max_{i=1,\dots,N} (a_i - L \rho_{\bm M} (\bm x,\bm x_i)).
\end{split}
\end{equation}
In binary classification tasks, let $t_i\in \{-1,+1\}$ indicate the label of $x_i$, $i=1,\dots, N$.  

Based on the framework of (\ref{frame}) and (\ref{diam}), firstly we propose an optimization formula which penalizes the $\mbox{L-Ratio}^{Diam}$: 
\begin{equation} 
\label{maopt1}
\begin{array}{cc}
\min\limits_{\bm a, \bm \xi,\bm M, \hat \diam,\hat L }&  \hat L  \hat \diam + \alpha \sum_{i=1}^N \xi_i \\
s.t. & \frac{\lvert a_i - a_j \rvert}{\rho_{\bm M} (\bm x_i, \bm x_j)} \leq \hat L \\
     & \rho_{\bm M} (\bm x_i, \bm x_j) \leq \hat \diam\\
     & t_i a_i = 1- \xi_i\\
     & \xi_i \geq 0, \bm M \in \bm M_{+}\\
     & \quad x_i \in \bm S, x_j \in \bm S.
\end{array}
\end{equation}
 
At first glance, the optimization problem seems quite complex. However, based on the smoothness assumption, balanced class assumption ($|\bm S_1|=|\bm S_2|$)  and some equivalent transformations, as illustrated in Appendix~\ref{opt1}, the following optimization problem can be obtained:
\begin{equation}
\label{maopt3}
\begin{array}{cc}
\min\limits_{\bm \xi,  \bm {M'},  d}&  cd + \sum \xi_{ij} \\
s.t. 
& \rho_{\bm {M'}} (\bm x_i, \bm x_j) \geq  2-\xi_{ij} \\
%& \rho_{\bm {M'}} (\bm x_i, \bm x_j) \geq  \xi_{ij}-2 \\
& x_i \mbox{ and } x_j \mbox{ are instance pairs with different labels}\\
&  \rho_{\bm {M'}} (\bm x_m, \bm x_n) \leq d\\
& \xi_{ij} \geq 0,   \bm {M'} \in \bm M_{+}\\
& \quad x_m, x_n \in \bm S.
\end{array}
\end{equation}
Intuitively speaking, the first set of inequality constraints indicate that the distances between samples from different classes should be large; and the third set of inequality constraints indicate that the estimated diameter should be small.

Based on the framework in (\ref{frame}) and (\ref{intra}),  we can also propose an optimization formula which penalizes the upper bound of $\mbox{L-Ratio}^{Intra}$: 
\begin{equation} 
\label{ma2opt1}
\begin{array}{cc}
\min\limits_{\bm a, \bm \xi,\bm M, \hat \diam,\hat L }&  \hat L  \hat \diam + \alpha \sum_{i=1}^N \xi_i \\
s.t. & \frac{\lvert a_i - a_j \rvert}{\rho_{\bm M} (\bm x_i, \bm x_j)} \leq \hat L \\
     & \rho_{\bm M} (\bm x_m, \bm x_n) \leq \hat \diam\\
     & x_m \mbox{ and } x_n \mbox{ are instance pairs with the same label}\\
     & t_i a_i = 1- \xi_i\\
     & \xi_i \geq 0, \bm M \in \bm M_{+}\\
     & \quad x_i, x_j \in \bm S.
\end{array}
\end{equation}
The only difference between (\ref{maopt1}) and (\ref{ma2opt1}) lies on the selected instance pairs to estimate $\hat{\diam}$: (\ref{maopt1}) utilizes all instance pairs to estimate the diameter of all the training instances, while  (\ref{ma2opt1}) utilizes the instances pairs with the same label to estimate the maximum intra-class dispersion. Similarly to the transformations from  (\ref{maopt1}) to  (\ref{maopt3}), the following optimization problem can be obtained:
\begin{equation}
\label{ma2opt3}
\begin{array}{cc}
\min\limits_{\bm \xi,  \bm {M'},  d}&  cd + \sum \xi_{ij} \\
s.t. 
& \rho_{\bm {M'}} (\bm x_i, \bm x_j) \geq  2-\xi_{i} - \xi_{j} \\
& x_i \mbox{ and } x_j \mbox{ are instance pairs with different labels}\\
&  \rho_{\bm {M'}} (\bm x_m, \bm x_n) \leq d\\
& x_m \mbox{ and } x_n \mbox{ are instance pairs with the same label}\\
& \xi_{i} \geq 0,   \bm {M'} \in \bm M_{+}.
\end{array}
\end{equation}

In order to solve (\ref{maopt3}) and (\ref{ma2opt3}) more efficiently, alternating direction methods of multipliers (ADMM) have been adopted (see {\bf Algorithm}~\ref{algo:relgraph}), and the detailed derivation of the ADMM algorithm is presented in Appendix~\ref{ADMM}. 

\begin{algorithm}
\textbf{Input:} \\
${\bm A}_1, {\bm A}_2$\\
\textbf{Initialize:} \\
$\bm M = \bm I, \bm m_1=\bm m_2= \mbox{vector}(\bm M), \bm p= 2- \bm A_1 \bm m_1,$ \\$\bm q = 2 - \bm A_2 \bm m_2, \bm \alpha_{1,2,3,4} = \bm 0$
\begin{algorithmic}
\WHILE{not converged}
\STATE{
1. Update ${\bm p}_{ij}^{t+1}$ using (\ref{update_p}) \\
2. Update ${\bm q}_{ij}^{t+1}$ using bisection search for $t^*$ and Equation \ref{update_q}\\
3. Update ${\bm m}_1^{t+1}$ using (\ref{update_m1})\\
4. Update ${\bm m}_2^{t+1}$ using (\ref{update_m2})\\
5. Update ${\bm m}^{t+1}$ using (\ref{update_m})\\
6. Update the Lagrangian multipliers $\bm \alpha^{t+1}_1$, ${\bm \alpha}^{t+1}_2$, ${\bm \alpha}^{t+1}_3$, ${\bm \alpha}^{t+1}_4$ using (\ref{update_alpha})
}
\ENDWHILE
\end{algorithmic}
\caption{ADMM for (\ref{maopt3})}
\label{algo:relgraph}
\textbf{Output:} ${\bm M}$ \\
\end{algorithm}

\section{Experiments}\label{s:experiments}

To evaluate the performance of our proposed methods, we compare them with four widely adopted distance-based algorithms: Nearest Neighbor (NN), Large Margin Nearest Neighbor (LMNN)~\cite{weinberger2009distance}, Maximally Collapsing Metric Learning (MCML)~\cite{globerson2005metric} and Neighborhood Components Analysis (NCA)~\cite{goldberger2005neighbourhood}. 
Under our framework, we have implemented Lip$^{D}$ (based on the diameter Lipschitz margin ratio), Lip$^{I}$ (based on the intra-class Lipschitz margin ratio), Lip$^{D}$(P) (ADMM-based fast Lip$^{D}$), Lip$^{I}$(P) (ADMM-based fast Lip$^{I}$). 

Our proposed Lip$^{D}$, Lip$^{I}$ are implemented using the cvx toolbox\footnote{http://cvxr.com/} in MATLAB with the solver of SeDuMi \cite{sturm1999using}. The $C$ in our algorithm is fixed at $1$ and the $\lambda$ in the ADMM algorithm is fixed at $1$. The LMNN, MCML and NCA are from the dimension reduction toolbox\footnote{https://lvdmaaten.github.io/drtoolbox/}. 

In the experimente, we focus on the most representative task, binary classification. Eight publicly available datasets from the websites of UCI\footnote{https://archive.ics.uci.edu/ml/datasets.html} and LibSVM\footnote{https://www.csie.ntu.edu.tw/~cjlin/libsvmtools/datasets/binary.html} are adopted to evaluate the performance, namely Statlog/LibSVM Australian Credit Approval (Australian), UCI/LibSVM Original Breast Cancer Wisconsin (Cancer), UCI/LibSVM Pima Indians Diabetes (Diabetes), UCI Echocardiogram (Echo), UCI Fertility (Fertility), LibSVM Fourclass (Fourclass), UCI Haberman's Survival (Haberman) and UCI Congressional Voting Records (Voting). For each dataset, $60\%$ instances are randomly selected as training samples, the rest as test samples. This process is repeated $10$ times and the mean accuracy is reported.  

\begin{table*}
\centering
\caption{Experiment Results (Mean Accuracy), with the best ones in bold and underlined.}
\label{Experiment_results}
\begin{tabular}{c|c|c|c|c|c|c|c|c}
\hline
datasets  &  Lip$^{D}$             &  Lip$^{D}$(P)             & Lip$^{I}$               &  Lip$^{I}$(P)
          &  NN                    &  LMNN                     & MCML                    &  NCA
\\ \hline\hline
Australian&$79.64\pm2.27$                 &$80.04\pm1.92$                    &\underline{$\bm{80.90}\pm1.74$} &$80.29\pm2.15$
          &$79.89\pm1.31$                 &$79.96\pm2.61$                    &$79.89\pm2.30$                  &$79.89\pm1.18$
\\ \hline
Cancer    &$95.30\pm1.12$                 &$94.84\pm0.95$                    &$95.27\pm1.01$                  &$94.84\pm0.95$
          &$95.61\pm0.68$                 &\underline{$\bm{95.41}\pm0.66$}   &$95.37\pm1.14$                  &$94.95\pm1.17$
\\ \hline
Diabetes  &$69.42\pm2.03$                 &$69.38\pm1.59$                    &$69.64\pm2.62$                  &$68.80\pm1.29$
          &$69.46\pm1.22$                 &$69.90\pm1.79$                    &\underline{$\bm{70.03}\pm1.34$} &$68.44\pm2.69$          
\\ \hline
Echo      &$68.00\pm5.49$                 &\underline{$\bm{69.00}\pm6.30$}   &$68.67\pm8.64$                 &$68.67\pm5.92$
          &$65.36\pm2.43$                 &$62.00\pm10.56$                    &$66.33\pm2.92$                  &$66.33\pm4.97$
\\ \hline
Fertility &$79.02\pm4.48$                 &$81.46\pm5.04$                    &$78.05\pm6.60$                  &$80.98\pm3.78$
          &$83.21\pm2.79$                 &\underline{$\bm{84.39}\pm2.36$}   &$83.17\pm5.69$                  &$83.66\pm2.31$
\\ \hline
Fourclass &\underline{$\bm{99.91}\pm0.14$}&\underline{$\bm{99.91}\pm0.14$}   &$99.86\pm0.15$                  &$99.88\pm0.15$
          &$99.87\pm1.14$                 &$99.68\pm0.42$                    &$99.88\pm0.20$                  &$99.68\pm0.62$

\\ \hline
Haberman  &\underline{$\bm{66.42}\pm2.20$}&$66.26\pm3.12$                    &\underline{$\bm{66.42}\pm2.27$} &$65.77\pm2.83$
          &$66.25\pm1.74$                 &$66.26\pm3.12$                    &\underline{$\bm{66.42}\pm2.24$} &$63.66\pm3.93$
\\ \hline
Voting    &$93.37\pm2.29$                 &$92.40\pm1.90$                    &\underline{$\bm{93.83}\pm1.26$} &$92.40\pm1.90$
          &$92.85\pm0.79$                 &$93.31\pm0.72$                    &$92.40\pm1.66$                  &$93.37\pm1.50$
\\ \hline
\# of best& 2 &2 & \textbf{3} & 0 & 0 & 2& 2 & 0\\ \hline
\end{tabular}
\label{table:results}
\end{table*}

As shown in Table~\ref{table:results}, the proposed algorithms Lip achieve the best mean accuracy on four datasets and equally best with MCML on one dataset. The Lip outperforms 1-NN and NCA on seven datasets and LMNN and MCML on five datasets. The only dataset that the Lip performs worse than all other methods is Fertility, in which our method potentially suffers from within-class outliers and hence has a large intra-class dispersion. Apart from this dataset, LMNN or MCML outperforms the Lip by only a small performance gap, less than $0.5\%$. Such encouraging results demonstrate the effectiveness of the proposed framework.

\section{Conclusions and Future Work}\label{s:conclusion}

In this paper, we have presented that the representation ability of Lipschitz functions is very strong and the complexity of the Lipschitz functions in a metric space can be controlled by penalizing the Lipschitz margin ratio. Based on these desirable properties, we have proposed a new metric learning framework via maximizing the Lipschitz margin ratio. An application of this framework for learning the squared Mahalanobis metric has been implemented and the experiment results are encouraging. 

The diameter Lipschitz margin ratio or the intra-class Lipschitz margin ratio in the optimization function is equivalent to an adaptive regularization. In other words, since we encourage samples to stay close within the same class, samples which locate near the class boundary are valued more than those in the center. Therefore, the performance of our method may deteriorate under the existence of outliers and this problem has been reported on the dataset Fertility. We aim to develop more robust methods in our future work.

The local property within a dataset could vary dramatically, and hence it is worthwhile to develop an algorithm based on local Lipschitz margin ratio. One option is to follow the idea of LMNN, learning a general metric but considering different local Lipschitz margin ratio; or we can learn a separate metric on each local area. 

%Since the proposal is a general framework, it will be valuable to check its effectiveness with other metric settings. This will be our future work.

\bibliography{myref_new}
\bibliographystyle{IEEEtran}

\begin{appendix}

\subsection{Proof on Proposition \ref{pro2}}
\label{proof_a1}

\begin{proof}
In any metric space $(\mathcal X,\rho)$, 
let $\bm x_a$ and $\bm x_b$ denote the training instances which satisfy
\begin{equation*}
\begin{split}
  \rho(\bm x_a, \bm x_b) =  \diam(\bm S, \rho) =  \mathop{\argmax}\limits_{\bm x_a, \bm x_b \in \bm S} \rho(\bm x_a, \bm x_b).
\end{split}
\end{equation*}
(1) If $t_a=t_b$,
\begin{equation*}
\begin{split}
  \diam(\bm S, \rho)  & = \rho (\bm x_a, \bm x_b) \\
    & =  \diam(\bm  S_{t_a}, \rho)\\
    & \leq   \diam(\bm  S_{1}, \rho) + \diam(\bm  S_{-1}, \rho)+D(\bm S_{-1}, \bm S_{1}).
\end{split}
\end{equation*}
(2) If $t_a \neq t_b$, let $\bm x_n$ and $\bm x_m$ denote the nearest instances from different classes, i.e.
\begin{equation*}
\rho(\bm x_n, \bm x_m)= D(\bm S_{1}, \bm S_{-1})= \min\limits_{\bm x_i \in \bm S_{-1}, \bm x_j\in \bm S_{+1} } \rho(\bm x_i, \bm x_j),
\end{equation*}
where $ \bm x_n \in \bm S_{t_a}, \bm x_m \in \bm S_{t_b}$.
We can see
\begin{equation*}
\begin{split}
   \diam(\mathcal X, \rho) &  = \rho (\bm x_a, \bm x_b) \\
    & \leq   \rho(\bm x_a, \bm x_n) + \rho (\bm x_n, \bm x_m)   + \rho(\bm x_m, \bm x_b)   \\
    & \leq   \diam(\bm  S_{1}, \rho) + D(\bm S_{-1}, \bm S_{1}) + \diam(\bm  S_{-1}, \rho).
\end{split}
\end{equation*}
Take the definition of $\mbox{L-Ratio}^{Diam}$ and $\mbox{L-Ratio}^{Intra}$:
\begin{equation*}
\begin{split}
  \frac{1}{\mbox{L-Ratio}^{Diam}}  & =
  \frac{\diam(\mathcal X, \rho)}{D (\bm S_1, \bm S_{-1})}  \\
  & \leq  \frac{ \diam(\bm  S_{1}, \rho) + D(\bm S_{-1}, \bm S_{1}) + \diam(\bm  S_{-1}, \rho)}{D(\bm S_{-1}, \bm S_{1})}  \\
  &= \frac{ \diam(\bm  S_{1}, \rho) + \diam(\bm  S_{-1}, \rho)}{D(\bm S_{-1}, \bm S_{1})} + 1 \\
  &= \frac{1}{\mbox{L-Ratio}^{Intra}} + 1 .
\end{split}
\end{equation*}
\end{proof}

\subsection{Properties of Lipschitz Functions}

We can construct Lipschitz functions via the basic ones using the following Lemmas.
\begin{lemma}
\label{Lipschi_add}
(\cite{weaver1999lipschitz}) Let $u, v\in Lip(\mathcal X)$. Then\\
(a) $L(u+ v) \leq L(u) + L(v)$,\\
(b) $L(a u) \leq |a| L(u)$, where $a$ is a constant,\\
(c) $L(\min(u,v)) \leq \max\{L(u),L(v)\}$, where $\min(u,v)$ denotes the pointwise minimum of the functions $u$ and $v$.
\end{lemma}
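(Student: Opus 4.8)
The plan is to derive all three bounds directly from the variational form of the Lipschitz constant, $L(f) = \sup_{\bm x_1 \neq \bm x_2} |f(\bm x_1) - f(\bm x_2)| / \rho(\bm x_1, \bm x_2)$. In each case I would fix an arbitrary pair $\bm x_1 \neq \bm x_2$, bound the increment $|g(\bm x_1) - g(\bm x_2)|$ of the relevant combination $g$ by the claimed constant times $\rho(\bm x_1, \bm x_2)$, and then divide by $\rho(\bm x_1, \bm x_2)$ and pass to the supremum over all such pairs. Since the bound will hold uniformly, the supremum inherits it, which is exactly the asserted inequality on the Lipschitz constants.

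Parts (a) and (b) are routine and I expect no difficulty. For (a) I would apply the triangle inequality for the absolute value to split $|(u+v)(\bm x_1) - (u+v)(\bm x_2)|$ into $|u(\bm x_1)-u(\bm x_2)| + |v(\bm x_1)-v(\bm x_2)|$, bound each summand by $L(u)\rho(\bm x_1,\bm x_2)$ and $L(v)\rho(\bm x_1,\bm x_2)$ respectively, and collect the factor $L(u)+L(v)$. For (b) I would use the homogeneity $|(au)(\bm x_1)-(au)(\bm x_2)| = |a|\,|u(\bm x_1)-u(\bm x_2)| \leq |a|\,L(u)\rho(\bm x_1,\bm x_2)$; the same supremum step then yields the claim (in fact with equality, though only ``$\leq$'' is needed).

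The hard part will be (c), because the pointwise minimum is nonlinear and the triangle inequality no longer applies directly; the natural approach is a case analysis. Writing $w = \min(u,v)$ and fixing $\bm x_1 \neq \bm x_2$, I would assume without loss of generality that $w(\bm x_1) \geq w(\bm x_2)$ (swapping the two points otherwise), so that $|w(\bm x_1)-w(\bm x_2)| = w(\bm x_1)-w(\bm x_2)$. Then, again without loss of generality, suppose the minimum at the smaller endpoint is realized by $u$, i.e. $w(\bm x_2) = u(\bm x_2)$ (the case $w(\bm x_2)=v(\bm x_2)$ being symmetric). Since $w \leq u$ pointwise gives $w(\bm x_1) \leq u(\bm x_1)$, I would chain
\begin{equation*}
\begin{split}
w(\bm x_1) - w(\bm x_2) &\leq u(\bm x_1) - u(\bm x_2) \\
&\leq L(u)\,\rho(\bm x_1,\bm x_2) \\
&\leq \max\{L(u),L(v)\}\,\rho(\bm x_1,\bm x_2),
\end{split}
\end{equation*}
and finish by dividing through and taking the supremum. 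The only genuine care required is bookkeeping of the two independent reductions --- which endpoint carries the larger value of $w$, and which of $u$ or $v$ attains the minimum at the smaller endpoint --- but each reduction is legitimized by the symmetry of the roles of $\bm x_1,\bm x_2$ and of $u,v$, so beyond keeping the four cases organized no real obstacle remains.
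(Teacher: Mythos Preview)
Your proof is correct and is the standard elementary argument. The paper itself does not supply a proof of this lemma; it merely states the result with a citation to Weaver's \emph{Lipschitz Algebras} \cite{weaver1999lipschitz}, so there is no in-paper argument to compare against. Your treatment of part~(c) via the two symmetry reductions (ordering $w(\bm x_1)\ge w(\bm x_2)$, then choosing which of $u,v$ realizes the minimum at $\bm x_2$) is exactly the clean way to handle the nonlinearity of the pointwise minimum, and matches the usual textbook proof.
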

This lemma illustrates that after the operations of addition, multiplication by constant, minimization and maximization, the results are still Lipschitz functions.
\begin{lemma}
\label{Lipschi_multiply}
(\cite{weaver1999lipschitz}) Let $u, v \in Lip(\mathcal X)$ and $u, v$ are bounded scale-value functions. Then\\
(a) $L(uv) \leq \|u\|_{\infty} L(v) + \|v\|_{\infty} L(u)$, where $\|u\|_{\infty} = \sup_{\bm x} u(\bm x)$. \\
(b) If $\diam (\mathcal X) \leq \infty$, then the product of any two scalar-valued Lipschitz functions is again Lipschitz.
\end{lemma}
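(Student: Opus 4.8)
The plan is to treat the two parts in turn, deriving (b) as a consequence of (a). For part (a) I would work directly from the supremum characterization of the Lipschitz constant given in the first Definition, namely $L(f) = \sup_{\bm x_1 \neq \bm x_2} |f(\bm x_1) - f(\bm x_2)| / \rho(\bm x_1, \bm x_2)$. The central device is the standard ``add-and-subtract'' decomposition of the increment of a product: for any $\bm x_1, \bm x_2 \in \mathcal X$ I would write
\begin{equation*}
u(\bm x_1) v(\bm x_1) - u(\bm x_2) v(\bm x_2) = u(\bm x_1)\bigl(v(\bm x_1) - v(\bm x_2)\bigr) + v(\bm x_2)\bigl(u(\bm x_1) - u(\bm x_2)\bigr).
\end{equation*}
Taking absolute values and applying the triangle inequality splits the product increment into two terms, each of which factors into a bounded part and a Lipschitz increment.

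To each term I would then apply two estimates: the pointwise bound $|u(\bm x_1)| \leq \|u\|_{\infty}$ (respectively $|v(\bm x_2)| \leq \|v\|_{\infty}$), which is where the boundedness hypothesis enters, and the Lipschitz inequality $|v(\bm x_1) - v(\bm x_2)| \leq L(v)\,\rho(\bm x_1, \bm x_2)$ (respectively $|u(\bm x_1) - u(\bm x_2)| \leq L(u)\,\rho(\bm x_1, \bm x_2)$). This yields $|u(\bm x_1)v(\bm x_1) - u(\bm x_2)v(\bm x_2)| \leq \bigl(\|u\|_{\infty} L(v) + \|v\|_{\infty} L(u)\bigr)\rho(\bm x_1, \bm x_2)$. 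Dividing by $\rho(\bm x_1, \bm x_2)$ for $\bm x_1 \neq \bm x_2$ and taking the supremum over all such pairs gives exactly the claimed inequality $L(uv) \leq \|u\|_{\infty} L(v) + \|v\|_{\infty} L(u)$, with the right-hand side finite precisely because $u$ and $v$ are bounded.

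For part (b) the key observation is that a finite diameter turns every Lipschitz function into a bounded one, which then lets me invoke part (a). Fixing any base point $\bm x_0 \in \mathcal X$, for arbitrary $\bm x$ I would estimate $|u(\bm x)| \leq |u(\bm x_0)| + |u(\bm x) - u(\bm x_0)| \leq |u(\bm x_0)| + L(u)\,\rho(\bm x, \bm x_0) \leq |u(\bm x_0)| + L(u)\,\diam(\mathcal X, \rho)$, so that $\|u\|_{\infty} < \infty$, and likewise $\|v\|_{\infty} < \infty$. Substituting these finite bounds into the inequality of part (a) shows $L(uv) < \infty$, i.e. $uv \in Lip(\mathcal X)$. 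The argument is almost entirely routine; the only point requiring care is the ordering of the two parts, since (a) presupposes boundedness, so (b) must first deduce boundedness from the finite-diameter hypothesis before (a) is applicable. (I also note that the hypothesis in (b) should read $\diam(\mathcal X, \rho) < \infty$ rather than $\leq \infty$.)
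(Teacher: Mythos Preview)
Your proof is correct and follows the standard argument. Note, however, that the paper does not actually supply its own proof of this lemma: it is stated in the Appendix with a citation to Weaver's \emph{Lipschitz Algebras} and accompanied only by the explanatory sentence ``This lemma illustrates that after the operations of function multiplication, the results are Lipschitz functions if the basic Lipschitz functions is bounded.'' So there is no in-paper proof to compare against; your add-and-subtract decomposition for (a) and the finite-diameter-implies-boundedness reduction for (b) are precisely the textbook route one finds in the cited reference. Your remark that the hypothesis in (b) should read $\diam(\mathcal X, \rho) < \infty$ rather than $\leq \infty$ is also well taken.
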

This lemma illustrates that after the operations of function multiplication, the results are Lipschitz functions if the basic Lipschitz functions is bounded.
%\begin{lemma}
%Let $\{\mathcal X_k\}_{k=1}^K$ indicates a disjoint partition of set $\mathcal X$ and the distance metric is Euclidean;
%%Let $\forall \bm x_1, \bm x_2 \in \mathds X, d_{\mathcal X}(\bm x_1, \bm x_2)= \|\bm x_1 - \bm x_2\|_2$;
%Let $h_k \in Lip(\mathcal X)$; Let $\mathcal F$
%indicates a set of functions
%$\mathcal F =\{\bm x \rightarrow \sum_k \mathds 1\{\bm x\in \mathcal X_k\} h_k\}$.
%Then if $f$ is continuous, for any $f$, $Lip(f)\leq \max_k Lip(h_k)$.
%\end{lemma}
%This lemma illustrates that we after decomposition .
%These properties would be useful to construct Lipschitz functions via the the basic ones.

\subsection{Relationship between Lipschitz Margin Ratio and LMML \cite{schultz2004learning}}
\label{appendix:LMML}

The Large Margin Metric Learning (LMML) algorithm~\cite{schultz2004learning}  has a close relationship with the proposed framework (\ref{frame}). Based on our proposed framework, the penalty term of LMML could be interpreted as an upper bound of the inverse Lipschitz margin ratio. At the same time, the proposed framework could suggest a reasonable strategy for choosing the target neighbors and the imposter neighbors in LMML.

LMML uses the Mahalanobis metric $D_{\bm M}$, and the classification function of NN is equivalent to the following $f(\bm x)$:
\begin{equation}
\label{class_lmml}
\begin{split}
  f(\bm x) & = D_{\bm M} (\bm x, \bm S_{-1}) -   D_{\bm M} (\bm x, \bm S_{1}) \\
    & =  \min\limits_{a}\{ \rho_{\bm M} (\bm x, \bm x_a) \} -   \min\limits_{b}\{ \rho_{\bm M} (\bm x, \bm x_b)\},
\end{split}
\end{equation}
where $\bm x_a \in \bm S_{-1} $, $\bm x_b \in \bm S_{1}$.

Then LMML adopts an upper bound of $1/\mbox{L-Ratio}^{Diam} \leq L(f) \diam(\mathcal X, D_{\bm M})$ as the penalty term. Because $L(\rho_M(\bm x,\bm x_a)) = 1$, according to Lemma \ref{Lipschi_add}(c),  $L(\min\limits_{a}\{ \rho_{\bm M} (\bm x, \bm x_a) \}) \leq 1$. Then according to Lemma \ref{Lipschi_add}(a), $L(f)$ is bounded by $2$ and
\begin{equation*}
    \begin{split}
       & L(f) \max_{n,m} (\bm x_n - \bm x_m)^T  \bm M (\bm x_n - \bm x_m)\\
       & = L(f) \max_{n,m} \|(\bm x_n - \bm x_m)^T \bm M (\bm x_n - \bm x_m) \|_2 \\
        &  \leq L(f) \max_{n,m} \|\bm x_n - \bm x_m\|_2^2 \|\bm M\|_F\\
        &  \leq C \|\bm M\|_F,
    \end{split}
\end{equation*}
where $C=2 \max_{n,m} \|\bm x_n - \bm x_m\|_2^2$ and $\bm x_n, \bm x_m \in \mathcal X$. The first inequality holds because the matrix Frobenius norm is consistent with the vector $l_2$ norm. Therefore, the Frobenius norm or the squared Frobenius norm may be used as the penalty term.

Based on the above discussion, in this special case, the proposed framework (\ref{frame}) could be represented as
\begin{equation}
\label{LMML_temp}
\begin{array}{cc}
\min\limits_{\bm M, \bm \xi}&   \|\bm M\|_F^2 + \alpha \sum_{i=1}^N \xi_i^o \\
s.t. & t_i f(\bm x_i; \bm a) \geq 1- \xi_i^o\\
     & \xi_i^o \geq 0, \bm M \in \bm M_{+}.
\end{array}
\end{equation}
Then, the constraints of $\rho_{\bm M} (\bm x_i,\bm x_k) - \rho_{\bm M} (\bm x_i, \bm x_j)\geq 1- \xi_i, j\rightarrow i, k \nrightarrow i$ in the optimization problem of LMML serve as a heuristic approximation of $t_i f(\bm x_i; \bm a)\geq 1- \xi_i$. 

In fact, by choosing the target neighbor $\bm x_j$ of $\bm x_i$ as the nearest neighbor within the same class measured via the Euclidean metric and the imposter neighbors $\bm x_k$ as all the instances within the different class, i.e. $j=\argmin_{u} \rho_{\bm M= \bm I} (\bm x_i, \bm x_u)$ and $k\in \{u| \bm x_u \in \bm  S_{-t_i}\}$, $\min\limits_{k}\{ \rho_{\bm M} (\bm x_i, \bm x_k) \} - \rho_{\bm M} (\bm x_i, \bm x_j)$ would be an upper bound of $t_i f(\bm x_i)$. 
This is because
\begin{equation*}
\begin{split}
   t_i f(\bm x_i) & = D_{\bm M} (\bm x_i, \bm S_{-t_i}) -   D_{\bm M} (\bm x_i, \bm S_{t_i}) \\
    & =  \min\limits_{k}\{ \rho_{\bm M} (\bm x_i, \bm x_k) \} - D_{\bm M} (\bm x_i, \bm S_{t_i}) \\
    & \geq    \min\limits_{k}\{ \rho_{\bm M} (\bm x_i, \bm x_k) \} - \rho_{\bm M} (\bm x_i, \bm x_j),
\end{split}
\end{equation*}
where the last inequality holds since $\bm x_j$ is $\bm x_i$'s nearest neighbor within the same class measured via the Euclidean metric and cannot be guaranteed to be the neighbor with in the same class with metric $\bm M$, but $-D_{\bm M} (\bm x_i, \bm S_{t_i}) \geq -\rho_{\bm M} (\bm x_i, \bm x_j) $ always holds. 

Let $t_i f(\bm x_i )'= \min\limits_{k}\{ \rho_{\bm M} (\bm x_i, \bm x_k) \} - \rho_{\bm M} (\bm x_i, \bm x_j)$, then the hinge loss of $f(\bm x')$, $(\max[1-t_i f(\bm x_i )',0 ])$, is the upper bound of the hinge loss of $f(\bm x)$, $(\max[1-t_i f(\bm x_i ),0 ])$, because
\begin{equation*}
    \begin{split}
                   & t_i f(\bm x_i )'=  \min\limits_{k}\{ \rho_{\bm M} (\bm x_i, \bm x_k) \} - \rho_{\bm M} (\bm x_i, \bm x_j) \leq t_i f(\bm x_i )\\
      \Rightarrow  & 1- t_i f(\bm x_i )' \geq 1- t_i f(\bm x_i )\\
      \Rightarrow  & \max[1-t_i f(\bm x_i )',0 ] \geq \max[1-t_i f(\bm x_i ),0 ].
    \end{split}
\end{equation*}
Therefore, the hinge loss  $\xi_i$ obtained by the following optimization problem is the upper bound of $\xi_i^o$ in (\ref{LMML_temp}):
\begin{equation*}
\begin{array}{cc}
\min\limits_{\bm M, \bm \xi}&   \|\bm M\|_F^2 + \alpha \sum_{i=1}^N \xi_i \\
s.t. & t_i f(\bm x_i; \bm a)' \geq 1- \xi_i\\
     & \xi_i \geq 0, \bm M \in \bm M_{+}.
\end{array}
\end{equation*}

The above optimization problem is equivalent to the following one:
\begin{equation*}
\begin{array}{cc}
\min\limits_{\bm M, \bm \xi}&   \|\bm M\|_F^2 + \alpha \sum_{i=1}^N \xi_i \\
s.t. & \rho_{\bm M} (\bm x_i, \bm x_k) - \rho_{\bm M} (\bm x_i, \bm x_j)\geq 1- \xi_i\\
     & \xi_i \geq 0, \bm M \in \bm M_{+},
\end{array}
\end{equation*}
where $\bm x_j$ is $\bm x_i$'s  nearest neighbor within the same class measured via the Euclidean metric and $\bm x_k$ are all the instances within the different class. This is a special case of the optimization problem of LMML. Instead of using a heuristic approximation of the empirical risk, this setting of the target neighbor and the imposter neighbors could guarantee that $\xi_i$ is the upper bound of $\xi_i^o$.

\subsection{Relationship between  Lipschitz Margin Ratio and LMNN \cite{weinberger2009distance}}
\label{appendix:LMNN}

Large Margin Nearest Neighbor (LMNN)~\cite{weinberger2009distance} also has a close relationship with the proposed framework. Similarly to that for LMML, the proposed framework could provide a reasonable strategy for choosing the target neighbors and the imposter neighbors in LMNN. In the following discussion, let $\bm x_j$ be $\bm x_i$'s  nearest neighbor within the same class measured via the Euclidean metric and let $\bm x_k$ be all the instances within the different class of $\bm x_i$. We shall show that the penalty term of LMNN could be interpreted as an upper bound of $1/\mbox{Local-Ratio}^{Intra}$ and $\xi_i$ is also an upper bound of the empirical loss of $\bm x_i$.

LMNN uses the Mahalanobis metric $\rho_{\bm M}$, and the classification function is the same as that of LMML (\ref{class_lmml}).

When the local margin of $\bm x_i$ with metric $\rho_{\bm M}$ is considered, the ideal subset $\bm S^l$ around $\bm x_i$ is $\{\bm x_i, \bm x_m, \bm x_n\}$, where $\bm x_m$ is $x_i$'s nearest neighbor within the same class measured via the metric $\rho_{\bm M}$ and $\bm x_n$ is $x_i$'s nearest neighbor within the different class measured via the metric $\rho_{\bm M}$.  This subset is important for $\bm x_i$ because it determines the classification function of $\bm x_i$. Based on Definition~\ref{local_inverse}, the local inverse Lipschitz margin ratio could be expressed as
\begin{equation*}
    \frac{\diam(\bm S^l, \rho_{\bm M}) }{\mbox{L-Margin}},
\end{equation*}
and based on Proposition \ref{pro2}, it could be bounded as
\begin{equation*}
\begin{split}
  \frac{1}{\mbox{Local-Ratio}^{Intra}} & =  \frac{\diam(\bm S^l_{1}, \rho^l)+\diam(\bm S^l_{-1}, \rho^l)}{\mbox{L-Margin}}\\
    & \leq \frac{1}{2}L(f) \{ \diam(\bm  S^l_{t_i}, \rho_{\bm M}) + \diam(\bm  S^l_{-t_i}, \rho_{\bm M}) \} \\
    & = \frac{1}{2} L(f) \rho_{\bm M}(\bm x_i, \bm x_m) ,
\end{split}
\end{equation*}
where the last equality holds because $\bm S^l= \{\bm x_i, \bm x_m, \bm x_n\}$, so $\bm S^l_{t_i}= \{\bm x_i, \bm x_m\}$, $\bm S^l_{-t_i}= \{\bm x_n\}$ and $\diam(\bm  S^l_{t_i}, \rho_{\bm M})= \rho_{\bm M}(\bm x_i, \bm x_m)$, $\diam(\bm  S^l_{-t_i}, \rho_{\bm M})= 0$. Because $L(f)\leq 2$, we can see
\begin{equation*}
\frac{1}{\mbox{Local-Ratio}^{Intra}}\leq  \rho_{\bm M}(\bm x_i, \bm x_m)  \leq   \rho_{\bm M} (\bm x_i, \bm x_j),
  %\mbox{Local-Ratio}^{Intra}
    % \leq C_1 \rho_{\bm M}(\bm x_i, \bm x_m)  \leq C_1  \rho_{\bm M} (\bm x_i, \bm x_j),
\end{equation*}
%where $C_1$ and $C_2$ are constants, and 
where the second inequality holds because $\bm x_j$ is defined as $\bm x_i$'s  nearest neighbor within the same class measured via the Euclidean metric and $\bm x_m$ may not be the same as $\bm x_j$, thus
\begin{equation*}
    \begin{split}
      \rho_{\bm M}(\bm x_i, \bm x_m)  & = D_{\bm M}(\bm x_i, \bm S_{t_i}) \\
        & = \min\limits_{\bm x_u\in \bm S_{t_i}} \rho_{\bm M}(\bm x_i, \bm x_u)  \\
        & \leq \rho_{\bm M}(\bm x_i, \bm x_j), \forall \bm  x_j \in \bm S_{t_i}.
    \end{split}
\end{equation*}
Therefore, it is reasonable to penalize the sum of the upper bound of the local inverse Lipschitz margin ratios via
\begin{equation*}
    \sum_{i} \rho_{\bm M} (\bm x_i,\bm x_j).
\end{equation*}

Similarly to the discussion of LMML, the strategy of choosing target and imposter neighbors could guarantee that $\xi_i$ is the upper bound of the empirical risk of $\bm x_i$.

The optimization problem based on the proposed framework (\ref{frame}) could be rewritten as
\begin{equation}
\begin{array}{cc}
\min\limits_{\bm M, \bm \xi}& \sum_{i} \rho_{\bm M} (\bm x_i,\bm x_j) + \alpha \sum_{i}  \xi_{i} \\
s.t. &  \rho_{\bm M} (\bm x_i,\bm x_j) -  \rho_{\bm M} (\bm x_i,\bm x_k) \geq 1- \xi_{i} \\
     & \xi_{i} \geq 0, \bm M \in \bm M_{+},
\end{array}
\end{equation}
where $\bm x_j$ is $\bm x_i$'s  nearest neighbor within the same class measured via Euclidean metric and $\bm x_k$ are all the instances within the different class of $\bm x_i$. This is an optimization problem of LMNN with a special strategy for choosing the target neighbor and imposter neighbor.  This strategy could guarantee that $\xi_i$ is the upper bound of the empirical risk.

\subsection{From (\ref{maopt1}) to (\ref{maopt3})}
\label{opt1}
 
To start with, we assume that the intra class area is relatively smooth and $\hat L$ is always determined by instance pairs with different labels, then the optimization problem (\ref{maopt1}) can be written as
\begin{equation} 
\label{maopt2}
\begin{array}{cc}
\min\limits_{\bm a, \bm \xi,\bm M, \hat \diam,\hat L }&  \hat L  \hat \diam + \alpha \sum_{n=1}^N \xi_i \\
s.t. & \frac{\lvert a_i - a_j \rvert}{\rho_{\bm M} (\bm x_i, \bm x_j)} \leq \hat L \\
     & x_i \mbox{ and } x_j \mbox{ are instance pairs }\\ 
     & \mbox{ with different labels}.\\
     & \rho_{\bm M} (\bm x_m, \bm x_n) \leq \hat \diam\\
     & t_m a_m = 1- \xi_m\\
     & \xi_i \geq 0, \bm M \in \bm M_{+}\\
     & \quad x_m, x_n \in \bm S.
\end{array}
\end{equation}
For the squared Mahalanobis metric, we have the following property:
\begin{equation*}
    \forall C,\ C \rho_{\bm M} (\bm x_i, \bm x_j) = \rho_{C \bm M} (\bm x_i, \bm x_j),
\end{equation*}
where $C$ is any constant. 
 
Based on this property, the optimization problem (\ref{maopt2}) is equivalent to the following one:
\begin{equation*}
\begin{array}{cc}
\min\limits_{\bm a, \bm \xi,  \bm M,  \hat L, \hat \diam}&  \hat L\hat \diam + \alpha \sum_{n=1}^N \xi_i \\
s.t. & \lvert a_i - a_j \rvert \leq \rho_{\hat L \bm M} (\bm x_i, \bm x_j)\\
& x_i \mbox{ and } x_j \mbox{ are instance pairs with different labels}\\
     &  \rho_{\hat L \bm M} (\bm x_m, \bm x_n) \leq \hat L\hat \diam\\
     & t_m a_m = 1- \xi_m\\
     & \xi_i \geq 0,   \bm M \in \bm M_{+}\\
     & \quad x_m, x_n \in \bm S.
\end{array}
\end{equation*}
Take $t_m a_m = 1- \xi_m$ into the first constraint, because $x_i$ and $x_j$ are from different classes, we have
\begin{equation*}
\begin{split}
    \lvert a_i - a_j \rvert 
    = \lvert 1-\xi_i - (\xi_j-1) \rvert 
    = \lvert 2- \xi_i - \xi_j \rvert. \\
\end{split}    
\end{equation*}
Therefore, the objective function becomes
\begin{equation*}
\begin{array}{cc}
\min\limits_{\bm \xi,  \bm M,  \hat L, \hat \diam}&  \hat L\hat \diam + \alpha \sum_{n=1}^N \xi_n \\
s.t. 
& \rho_{\hat L \bm M} (\bm x_i, \bm x_j) \geq \lvert 2-\xi_i - \xi_j \lvert\\
& x_i \mbox{ and } x_j \mbox{ are instance pairs with different labels}\\
&  \rho_{\hat L \bm M} (\bm x_m, \bm x_n) \leq \hat L\hat \diam\\
& \xi_i \geq 0,   \bm M \in \bm M_{+}\\
& \quad x_m, x_n \in \bm S,
\end{array}
\end{equation*}
which is equivalent to the following optimization problem:
\begin{equation*}
\begin{array}{cc}
\min\limits_{\bm \xi,  \bm M,  \hat L, \hat \diam}&  \hat L\hat \diam + \alpha \sum_{n=1}^N \xi_n \\
s.t. 
& \rho_{\hat L \bm M} (\bm x_i, \bm x_j) \geq  2-\xi_i - \xi_j \\
& \rho_{\hat L \bm M} (\bm x_i, \bm x_j) \geq  \xi_i + \xi_j - 2 \\
& x_i \mbox{ and } x_j \mbox{ are instance pairs with different labels}\\
&  \rho_{\hat L \bm M} (\bm x_m, \bm x_n) \leq \hat L\hat \diam\\
& \xi_{i} \geq 0,   \bm M \in \bm M_{+}\\
& \quad x_m, x_n \in \bm S.
\end{array}
\end{equation*}

%To simplify the notation, we denote $d=\hat L \hat \diam$, $\bm {M'} = \hat L \bm{M}$, $\xi_{ij}=\xi_i+\xi_j$ and $c=\frac{1}{\alpha}$. 
To simplify the notation, we denote $\xi_{ij}=\xi_i+\xi_j$. 
With the assumption of balanced class, i.e. $|\bm S_1|=|\bm S_2|=\frac{N}{2}$,
we have $\sum_{t_i \neq t_j} \xi_{ij} =  N \sum_{n=1}^N \xi_n $. 
Let $d=\hat L \hat \diam$, $\bm {M'} = \hat L \bm{M}$,  and $c=\frac{1}{\alpha N}$. This turns the optimization problem into:
\begin{equation*}
\begin{array}{cc}
\min\limits_{\bm \xi,  \bm {M'},  d}&  cd + \sum_{i,j=1}^N \xi_{ij} \\
s.t. 
& \rho_{\bm {M'}} (\bm x_i, \bm x_j) \geq  2-\xi_{ij} \\
& \rho_{\bm {M'}} (\bm x_i, \bm x_j) \geq  \xi_{ij}-2 \\
& x_i \mbox{ and } x_j \mbox{ are instance pairs with different labels}\\
&  \rho_{\bm {M'}} (\bm x_m, \bm x_n) \leq d\\
& \xi_{ij} \geq 0,   \bm {M'} \in \bm M_{+}\\
& \quad x_m, x_n \in \bm S.
\end{array}
\end{equation*}
The constraints with respect to $\xi_{ij}$ are $(i) \xi_{ij} \geq 2-\rho_{\bm {M'}} (\bm x_i, \bm x_j) $, $(ii) \xi_{ij} \leq 2+\rho_{\bm {M'}} (\bm x_i, \bm x_j) $ and $(iii) \xi_{ij} \geq 0$. The objective function is to minimize $\xi_{ij}$, based on the objective function,  constraints (iii), constraints (i) and the fact $\rho_{\bm {M'}} (\bm x_i, \bm x_j)\geq 0$, the maximal value of $\xi_{ij}$ would be smaller or equal to $2$. Thus constraints (ii) would always be satisfied. Thus constraints (ii) could be deleted and the optimization problem could be formulated as (\ref{maopt3}).

\subsection{ADMM Algorithm for  (\ref{maopt3}) and  (\ref{ma2opt3})}
\label{ADMM}

The only difference between (\ref{maopt3}) and (\ref{ma2opt3}) lies on the selected instance pairs to estimate $\hat{\diam}$. For simplicity, only the derivation process of ADMM for (\ref{maopt3}) is illustrated here. 

To start with,  (\ref{maopt3}) is as follows
%(\ref{maopt3}) is approximated via the following optimization problem 
\begin{equation*}
\begin{array}{cc}
\min\limits_{\bm \xi,  \bm {M'},  d}&  cd + \sum_{i,j=1}^N \xi_{ij} \\
s.t. 
& \rho_{\bm {M'}} (\bm x_i, \bm x_j) \geq  2-\xi_{ij} \quad \text{for} \  t_i \neq t_j\\
&  \rho_{\bm {M'}} (\bm x_m, \bm x_n) \leq d\\
& \xi_{ij} \geq 0,   \bm {M'} \in \bm M_{+}.
\end{array}
\end{equation*}
Apply the definition of the squared Mahalanobis directly into the constraint:
\begin{equation*}
\begin{array}{cc}
\min\limits_{\bm \xi,  \bm {M'},  d}&  cd + \sum_{i,j=1}^N \xi_{ij} \\
s.t. 
&  (\bm x_i - \bm x_j)  (\bm x_i - \bm x_j)^T \otimes \bm {M'} \geq  2-\xi_{ij} \quad \text{for} \  t_i \neq t_j\\
& (\bm x_m - \bm x_n)  (\bm x_m - \bm x_n)^T \otimes \bm {M'} \geq  d \\
& \xi_{ij} \geq 0,   \bm {M'} \in \bm M_{+},
\end{array}
\end{equation*}
where we define $A \otimes B = \sum_{i,j} A_{ij} \cdot B_{ij}$.

We now stack the columns of $\bm {M'}$ into a vector and call this vector $\bm m$. Similarly, we take the vectorization of $(\bm x_i-\bm x_j)(\bm x_i-\bm x_j)^T$ and $(\bm x_m-\bm x_n)(\bm x_m-\bm x_n)^T$, take their transpose and name them as $\bm A_{1,ij}$ and $\bm A_{2,mn}$, respectively. The optimization problem is then equivalent to
\begin{equation*}
\begin{array}{cc}
\min\limits_{\bm \xi,  \bm {M'},  d}&  cd + \sum_{i,j=1}^N \xi_{ij} \\
s.t. 
&  \xi_{ij} \geq  2- \bm A_{1,ij} \bm m  \quad \text{for} \  t_i \neq t_j\\
& d \geq \bm A_{2,mn} \bm m \\
& \xi_{ij} \geq 0, \bm {M'} \in \bm M_{+},
\end{array}
\end{equation*}
where 
\begin{equation*}
\begin{split}
\bm m&=\text{vector}(\bm {M'}) \in \mathbb{R}^{(p \times p) \times 1},\\
\bm A_{1,ij}&=[\text{vector}((\bm x_i-\bm x_j)(\bm x_i-\bm x_j)^T)]^T, \\
\bm A_{2,mn}&=[\text{vector}((\bm x_m-\bm x_n)(\bm x_m-\bm x_n)^T)]^T,
\end{split}
\end{equation*} 
$p=\dim (\bm {M'})$ and $\bm v=\mbox{vector}(\bm V)$ reshapes any matrix $\bm V \in \mathbb{R}^{a\times b}$ into a vector $\bm v \in \mathbb{R}^{(a\times b)\times 1}$.

Transform this problem into the consensus form \cite{parikh2014proximal}:
\begin{equation*}
\begin{array}{cc}
\min\limits_{\bm \xi,  \bm {M'},  d}&  c \max \limits_{i,j}(q_{ij}) + \sum_{i,j=1}^N \max \limits_{i,j}(0,p_{ij}) +\tilde I_{\bm M_+}(\bm M')\\
s.t. \quad
&  \bm p=2 -\bm A_1 \bm m_1, \quad \bm p \in \mathbb{R}^{(N_1 \times N_2) \times 1}\\
&  \bm q=\bm A_2 \bm m_2, \quad \bm q \in \mathbb{R}^{(N \times N) \times 1}\\
&  \bm m_1 = \bm m_2 = \bm m, \quad \bm m_1, \bm m_2 , \bm m \in \mathbb{R}^{(p \times p) \times 1},
\end{array}
\end{equation*}
where $\bm A_1 \in \mathbb{R}^{(N_1 \times N_2) \times (p \times p)}$ consists of $(N_1 \times N_2)$ blocks of $\bm A_{1,ij}$ and $\bm A_2 \in \mathbb{R}^{(N \times N) \times (p \times p)}$ consists of $(N \times N)$ blocks of $\bm A_{2,mn}$. Here $N_1$ and $N_2$ are the number of instances in class 1 and 2 respectively. 
$\tilde I _C (x) = 
\begin{cases}
    0, \ x \in C \\
    \infty, \ x \not\in C
\end{cases}$.

The Augmented Lagrangian function of the above optimization problem becomes
\begin{equation*}
\begin{split}
& L_\mu(\bm \alpha_1,\bm \alpha_2,\bm \alpha_3,\bm \alpha_4,\bm p, \bm q, \bm m_1, \bm m_2, \bm {M'})\\
=&c \max \limits_{i,j}(q_{ij}) + \sum_{i,j=1}^N \max \limits_{i,j}(0,p_{ij}) +\tilde I_{\bm M_+}(\bm M')+\\
& \bm \alpha_1^T(\bm m_1 - \bm m) +\bm \alpha_2^T(\bm m_2 - \bm m) + \\
& \bm \alpha_3^T(\bm p + \bm A_1 \bm m_1 -2) +\bm \alpha_4^T(\bm q - \bm A_2 \bm m_2)+ \\
&\frac{\mu}{2} {\lvert \lvert \bm m_1 - \bm m \rvert \rvert}_2^2 +\frac{\mu}{2} {\lvert \lvert \bm m_2 - \bm m \rvert \rvert}_2^2 + \\
&\frac{\mu}{2} {\lvert \lvert \bm p + \bm A_1 \bm m_1 -2 \rvert \rvert}_2^2 +\frac{\mu}{2} {\lvert \lvert \bm q - \bm A_2 \bm m_2 \rvert \rvert}_2^2, 
\end{split}    
\end{equation*}
where $\bm \alpha_1 \in \mathbb{R}^{(p \times p) \times 1}$, $\bm \alpha_2 \in \mathbb{R}^{(p \times p) \times 1}$, $\bm \alpha_3 \in \mathbb{R}^{(N_1 \times N_2) \times 1}$, $\bm \alpha_4 \in \mathbb{R}^{(N \times N) \times 1}$ are the Lagrangian multipliers and $\mu \in \mathbb{R}^{1}$ is the penalty parameter.

We apply the Alternating Direction Method of Multipliers algorithm (ADMM) to solve this problem. Specifically, we minimize $\bm p, \bm q, \bm m_1, \bm m_2, \bm {M'}$ respectively by fixing other variables and then update $\bm \alpha_1, \bm \alpha_2 ,\bm \alpha_3 ,\bm \alpha_4.$\\
(1) Update $p_{ij}$
\begin{equation*}
\begin{split}
\min \limits_{p_{ij}}L_\mu 
&\Leftrightarrow \min \limits_{p_{ij}}
\max(0,p_{ij}) + \bm \alpha_3^T p_{ij} +  \frac{\mu}{2} {\lvert \lvert p_{ij} + \bm A_{1,ij} \bm m_1 -2 \rvert \rvert}_2^2
%\frac{\mu}{2} {\lvert \lvert \bm p + \bm A_1 \bm m_1 -2 \rvert \rvert}_2^2.
\end{split}
\end{equation*}

According to the proposition in \cite{ye2011efficient},
\begin{equation*}
    S_\lambda(\omega)=\arg \min \limits_x \lambda \max(0,x)+ \frac{1}{2} {\lvert \lvert x- \omega \rvert \rvert} _2^2
\end{equation*}
has the solution
\begin{equation*}
S_\lambda(\omega)=
    \begin{cases}
      \omega - \lambda & \text{if} \ \omega > \lambda \\
      0 & \text{if} \ 0  \leq \omega \leq \lambda \\
      \omega & \text{if} \ \omega < 0.
    \end{cases}
\end{equation*}
Our minimization function can thus be formulated as
\begin{equation*}
\begin{split}
\min \limits_{p_{ij}}L_\mu 
&\Leftrightarrow \min \limits_{p_{ij}}  \max(0,p_{ij}) + \frac{\mu}{2} 
%{\lvert \lvert \bm p - (2- \bm A_1 \bm m_1 -\frac{\bm \alpha_3}{\mu}) \rvert \rvert}_2^2
{\lvert \lvert p_{ij} - (2- \bm A_{1,ij} \bm m_1 -\frac{\bm \alpha_{3,ij}}{\mu}) \rvert \rvert}_2^2\\
&\Leftrightarrow S_{\frac{1}{\mu}}(2- \bm A_{1,ij} \bm m_1 -\frac{\bm \alpha_{3,ij}}{\mu})
%S_{\frac{1}{\mu}}(2- \bm A_1 \bm m_1 -\frac{\bm \alpha_3}{\mu}).
\end{split}
\end{equation*}
Hence we have
\footnotesize
\begin{equation}
\label{update_p}
p_{ij}^{t+1}=
    \begin{cases}
    2- \bm A_{1,ij} {\bm m}_{1}^t -\frac{{\bm \alpha}^t_{3,ij} + 1}{\mu} & \text{if} \  2- \bm A_{1,ij} {\bm m}^t_1 -\frac{{\bm \alpha}^t_{3,ij}}{\mu} > \frac{1}{\mu} \\
      %2- \bm A_{1,ij} {\bm m}_{1,ij}^t -\frac{{\bm \alpha}^t_{3,ij} + 1}{\mu} & \text{if} \  2- \bm A_1 {\bm m}^t_1 -\frac{{\bm \alpha}^t_3}{\mu} > \frac{1}{\mu} \\
      0 & \text{if} \ 0 \leq 2- \bm A_1 {\bm m}^t_1 -\frac{{\bm \alpha}^t_3}{\mu}  \leq \frac{1}{\mu} \\
      2- \bm A_{1,ij} {\bm m}^t_1 -\frac{{\bm \alpha}^t_{3,ij}}{\mu} &  \text{if} \ 2- \bm A_{1,ij} {\bm m}^t_1 -\frac{{\bm \alpha}^t_{3,ij}}{\mu} < 0
    \end{cases}
\end{equation}
\normalsize
(2) Update $\mathit{q_{ij}}$
\begin{equation*}
\begin{split}
\min \limits_{q_{ij}}L_\mu 
&\Leftrightarrow \min \limits_{q_{ij}}  c\max_{i,j}(q_{ij}) + \bm \alpha_4^T q_{ij} + \frac{\mu}{2} {\lvert \lvert q_{ij} - \bm A_{2,ij}\bm m_2 \rvert \rvert}_2^2.
\end{split}
\end{equation*}

According to \cite{parikh2014proximal}, the optimization function
$$\min_x \max_i x_i + \frac{1}{2 \lambda} {\lvert \lvert x-v \rvert \rvert}_2^2 $$
can be written as
\begin{equation*}
\begin{array}{cc}
    \min_x t + \frac{1}{2 \lambda} {\lvert \lvert x-v \rvert \rvert}_2^2\\
    s.t. \quad
    x_i \leq t \quad i = 1, \cdots, n.
\end{array}
\end{equation*}
The optimal value $t^\star$ needs to satisfy the condition
$$    \sum_{i=1}^n \frac{1}{\lambda} \max(0,v_i - t^\star)=1, $$
and this equation can be solved by bisection. Then, the optimal $x^\star$ can be obtained as
$$x_i^\star = \min(t^\star,v_i).$$
Therefore, we rewrite our objective function as follows:
\begin{equation*}
\begin{split}
\min \limits_{q_{ij}}L_\mu 
&\Leftrightarrow \min \limits_{q_{ij}} \max(q_{ij}) + \frac{\mu}{2c}
{\lvert \lvert q_{ij} - (\bm A_{2,ij} \bm m_2 - \frac{\bm \alpha_{4,ij}}{\mu}) \rvert \rvert}_2^2.
%{\lvert \lvert \bm q - (\bm A_2 \bm m_2 - \frac{\bm \alpha_4}{\mu}) \rvert \rvert}_2^2.
\end{split}
\end{equation*}
Hence
\begin{equation}
\label{update_q}
%q_{ij}^{t+1}=\min(t^\star,\bm A_{2,ij} {\bm m}_{2,ij}^t - \frac{{\bm \alpha}_{4,ij}^t}{\mu}),
q_{ij}^{t+1}=\min(t^\star,\bm A_{2,ij} {\bm m}_{2}^t - \frac{{\bm \alpha}_{4,ij}^t}{\mu}),
\end{equation}
and $t^\star$ satisfies
%$$ \sum_{i,j=1}^N \frac{\mu}{c}(\bm A_{2,ij} {\bm m}_{2,ij}^t - \frac{{\bm \alpha}_{4,ij}^t}{\mu} - t^\star) =1. $$
$$ \sum_{i,j=1}^N \frac{\mu}{c}(\bm A_{2,ij} {\bm m}_{2}^t - \frac{{\bm \alpha}_{4,ij}^t}{\mu} - t^\star) =1. $$
(3) Update $\bm{m}_1$\\
\begin{equation*}
\begin{split}
\min \limits_{\bm m1}L_\mu 
\Leftrightarrow \min \limits_{\bm m1} & \quad {\bm \alpha}_1^T {\bm m}_1 +{\bm \alpha}_3^T {\bm A}_1 {\bm m}_1 + \\
& \frac{\mu}{2} {\lvert \lvert {\bm m}_1 - {\bm m} \rvert \rvert}_2^2 + \frac{\mu}{2} {\lvert \lvert \bm p + \bm A_1 \bm m_1 -2 \rvert \rvert}_2^2.
\end{split}
\end{equation*}

Take the derivative with respect to $\bm m_1$, we get
$$ \mu({\bm A}_1^T {\bm A_1} + {\bf I}) {\bm m}_1^\star + {\bm \alpha}_1 + {\bm A}_1^T {\bm \alpha}_3 - \mu {\bm m} + \mu {\bm A}_1^T {\bm p} - 2 \mu {\bm A}_1^T \bf{1} =0, $$
where $\bf I$ is the identity matrix and $\bf 1$ is the vector with all components being 1. Hence, we update $\bm m_1 $ as follows:
\footnotesize
\begin{equation}
\label{update_m1}
{\bm m}_1^{t+1} = ({\bm A}_1^T {\bm A}_1 + {\bf I})^{-1} ({\bm m}^t - \frac{{\bm \alpha}_1^t  + {\bm A}_1^T {\bm \alpha}_3^t + \mu {\bm A}_1^T {\bm p}^{t+1} - 2\mu {\bm A}_1^T {\bf 1}}{\mu}).
\end{equation}
\normalsize
We can save $({\bm A}_1^T {\bm A}_1 + {\bf I})^{-1}$ in the memory so as to improve the computational efficiency. \\
(4) Update $\bm{m}_2$\\
\begin{equation*}
\begin{split}
\min \limits_{\bm m2}L_\mu 
\Leftrightarrow \min \limits_{\bm m2}& \quad {\bm \alpha}_2^T \bm m_2 - {\bm \alpha}_4^T {\bm A}_2 {\bm m}_2 +\\ &\frac{\mu}{2} {\lvert \lvert \bm m_2 - \bm m \rvert \rvert}_2^2 + \frac{\mu}{2} {\lvert \lvert \bm q - \bm A_2 \bm m_2 \rvert \rvert}_2^2.
\end{split}
\end{equation*}

Take the derivative with respect to $\bm m_2$, we get
$$ \mu({\bm A}_2^T {\bm A_2} + {\bf I}) {\bm m}_2^\star + {\bm \alpha}_2 - {\bm A}_2^T {\bm \alpha}_4 - \mu {\bm m} - \mu {\bm A}_2^T {\bm q} =0 .$$
Update $\bm m_2 $ as follows:
\begin{equation}
\label{update_m2}
{\bm m}_2^{t+1} = ({\bm A}_2^T {\bm A}_2 + {\bf I})^{-1} ({\bm m}^t + \frac{ {\bm A}_2^T {\bm \alpha}_4^t + \mu {\bm A}_2^T {\bm q}^{t+1} - \bm{\alpha}_2^t}{\mu}).
\end{equation}
(5) Update $\bm{M'}$ \textit{ (and hence} $\bm m)$\\
\begin{equation*}
\begin{array}{cc}
    \min \limits_{\bm{M'}/\bm m} \tilde I_{\bm M_+}(\bm M') + \bm{\alpha}_1^T({\bm m}_1 - \bm m) + \bm{\alpha}_2^T({\bm m}_2 - \bm m) \\
    + \frac{\mu}{2} {\lvert \lvert {\bm m}_1 - \bm m \rvert \rvert}_2^2 + \frac{\mu}{2} {\lvert \lvert {\bm m}_2 - \bm m \rvert \rvert}_2^2.
\end{array}
\end{equation*}  

% Take the derivative with respect to $\bm m $, we get
% \begin{equation*}
% \begin{array}{cc}
% -\bm{\alpha}_1 -\bm{\alpha}_2 + \mu ({\bm m}^\star_1 - \bm m) + \mu ({\bm m}^\star_2 - \bm m)=0\\
% \bm {M}^{'\star} \in \bm{M_+}
% \end{array}
% \end{equation*}

Hence, update $\bm m$ as
\begin{equation}
\label{update_m}
\begin{split}
{\bm m}^{t+1} = &\prod \nolimits_{{\bm M}_+} \big(\mbox{matrix} (\frac{{\bm m}_1^{t+1} + {\bm m}_2^{t+1} }{2} + \frac{{\bm \alpha}^t_1 +{\bm \alpha}^t_2 }{2 \mu}) + \\
&\mbox{matrix} (\frac{{\bm m}_1^{t+1} + {\bm m}_2^{t+1} }{2} + \frac{{\bm \alpha}^t_1 +{\bm \alpha}^t_2 }{2 \mu})'\big)/2,
\end{split}
\end{equation}
where $\bm V=\mbox{matrix}(\bm v)$ is the reverse operation of $\bm v = \mbox{vector}(\bm V) $ 
and it reshapes a vector $v\in \mathbb{R}^{(p \times p) \times 1}$ into a matrix $\bm V \in \mathbb{R}^{p \times p}$. 
$\prod \nolimits_{{\bm M}_+} $ denotes the projection of a symmetric matrix onto the positive semidefinite cone ${\bm M}_+$.\\
(6) Update $\bm \alpha$
\begin{equation}
\label{update_alpha}
\begin{split}
\bm \alpha^{t+1}_1 & = \bm \alpha^{t}_1 + \mu (\bm m_1^{t+1} - \bm m^{t+1})\\
\bm \alpha^{t+1}_2 & = \bm \alpha^{t}_2 +\mu (\bm m_2^{t+1} - \bm m^{t+1} )\\
\bm \alpha^{t+1}_3 &= \bm
\alpha^{t}_3 +\mu (\bm p^{t+1} +\bm A_1 \bm m_1^{t+1} -2)\\
\bm \alpha^{t+1}_4 & = \bm \alpha^{t}_4 +\mu (\bm q^{t+1} - \bm A_2 \bm m_2^{t+1}).
\end{split}
\end{equation}

%\subsection{Boxplots for Each Datasets}
%
%\begin{figure}[htbp]
%\centering
%\includegraphics[width=0.243\textwidth]{1.eps}%
%\includegraphics[width=0.243\textwidth]{4.eps}\\
%\includegraphics[width=0.243\textwidth]{8.eps}%
%\includegraphics[width=0.243\textwidth]{6.eps}%
%\caption{Boxplots for `Australian', `Echo', `Voting' and `Fourclass'.}
%\end{figure}
%
%\begin{figure}[htbp]
%\centering
%\includegraphics[width=0.243\textwidth]{7.eps}%
%\includegraphics[width=0.243\textwidth]{2.eps}\\
%\includegraphics[width=0.243\textwidth]{3.eps}%
%\includegraphics[width=0.243\textwidth]{5.eps}%
%\caption{Boxplots for `Haberman', `Cancer', `Diabetes' and `Fertility'}
%\end{figure}

\end{appendix}

\end{document}